\documentclass{article}

\usepackage{microtype}
\usepackage{graphicx}
\usepackage{subfigure}
\usepackage{booktabs}

\usepackage{hyperref}

\DeclareMathAlphabet\mathbfcal{OMS}{cmsy}{b}{n}

\newcommand{\mat}[1]{\mathbf{#1}}

\setlength{\textfloatsep}{5pt}
\addtolength{\parskip}{0mm}
\usepackage{etoolbox}
\makeatletter
\preto{\@tabular}{\parskip=0pt}
\makeatother

\setlength{\belowdisplayskip}{5pt} 
\setlength{\belowdisplayshortskip}{0pt}
\setlength{\abovedisplayskip}{0pt} 
\setlength{\abovedisplayshortskip}{0pt}

\setlength{\abovecaptionskip}{3pt plus 0pt minus 1pt}
\setlength{\belowcaptionskip}{10pt plus 0pt minus 1pt}

\usepackage{colortbl}

\newcommand{\band}{\rowcolor{gray!20}}

\usepackage{hyperref}
\usepackage{url}
\usepackage{dirtytalk}

\newcommand{\eref}[1]{Eq.~(\ref{#1})}

\usepackage{adjustbox}
\usepackage{array}
\usepackage{booktabs}

\usepackage{placeins}

\usepackage{titlesec}

\usepackage[accepted]{icml2025}

\usepackage{amsmath}
\usepackage{amssymb}
\usepackage{mathtools}
\usepackage{amsthm}
\usepackage[utf8]{inputenc}
\usepackage{enumitem}

\usepackage{tcolorbox}
\newtcolorbox{codebox}{
    colback=black!5!white, colframe=black!75!white,
    boxrule=0.5mm, arc=3mm, fontupper=\ttfamily
}


\theoremstyle{plain}
\newtheorem{theorem}{Theorem}[section]

\newtheorem{lemma}[theorem]{Lemma}

\theoremstyle{definition}

\theoremstyle{remark}

\usepackage{thmtools,thm-restate}

\usepackage[textsize=tiny]{todonotes}

\icmltitlerunning{Preference Optimization via Contrastive Divergence}

\begin{document}

\twocolumn[
\icmltitle{Preference Optimization via Contrastive Divergence: \\ Your Reward Model is Secretly an NLL Estimator}

\icmlsetsymbol{equal}{*}

\begin{icmlauthorlist}
\icmlauthor{Zhuotong Chen}{yyy}
\icmlauthor{Fang Liu}{yyy}
\icmlauthor{Xuan Zhu}{yyy}
\icmlauthor{Haozhu Wang}{yyy}
\icmlauthor{Yanjun Qi}{yyy}
\icmlauthor{Mohammad Ghavamzadeh}{comp}
\end{icmlauthorlist}

\icmlaffiliation{yyy}{Amazon Web Services, Santa Clara, CA, United States}
\icmlaffiliation{comp}{Amazon Artificial General Intelligence (AGI), Sunnyvale, CA, United States}

\icmlcorrespondingauthor{Zhuotong Chen}{zhuotong@amazon.com}

\icmlkeywords{Preference optimization, Maximum likelihood estimation, Contrastive divergence, Ranking noise contrastive estimation}

\vskip 0.3in
]

\printAffiliationsAndNotice{}  %

\begin{abstract}
Existing studies on preference optimization (PO) have centered on constructing pairwise preference data following simple heuristics, 
such as maximizing the margin between preferred and dispreferred completions based on human (or AI) ranked scores.
However, none of these heuristics has a full theoretical justification. 
In this work, we develop a novel PO framework that provides theoretical guidance to effectively sample dispreferred completions. 
To achieve this, we formulate PO as minimizing the negative log-likelihood (NLL) of a probability model and propose to estimate its normalization constant via a sampling strategy.
As we will demonstrate, these estimative samples can act as dispreferred completions in PO. 
We then select contrastive divergence (CD) as the sampling strategy,
and propose a novel MC-PO algorithm that applies the Monte Carlo (MC) kernel from CD to sample \textit{hard negatives} w.r.t. the parameterized reward model.
Finally, we propose the OnMC-PO algorithm, an extension of MC-PO to the online setting. 
On popular alignment benchmarks, MC-PO outperforms existing SOTA baselines, and OnMC-PO leads to further improvement.
\end{abstract}

\section{Introduction}
\label{sec: introduction}
While large language models (LLMs) learn broad world knowledge, aligning their behavior precisely with human values is challenging due to the unsupervised nature of their training.
Reinforcement learning from human feedback (RLHF) \citep{ouyang2022training} has emerged as a class of effective algorithms to align LLMs \citep{schulman2017proximal}.
Recent works on direct preference optimization (DPO) \citep{rafailov2024direct} and its variants (e.g., Identity preference optimization \citep{azar2024general}) directly optimize an LLM to adhere to human values, without explicit reward modeling or RL.

The data for these algorithms are often collected in the form of preferences \citep{ziegler2019fine}.
This leads to more consistent labeling across human annotators as it reduces their cognitive load and avoids the need for absolute judgment, which can be more subjective. 
Existing studies on PO have predominately considered creating pairwise preference data via simple heuristics,
such as choosing a dispreferred completion by maximizing the gap with the preferred response in terms of human (AI) ratings \citep{tunstall2023zephyr,lambert2024t}.
However,
none of these heuristics has a full theoretical justification.
Here, we ask the question:
{\em ``How to choose dispreferred completion(s) for PO?"}

To answer this question,
we develop a novel PO framework that provides theoretical guidance on developing effective sampling strategies to select dispreferred completions.
To achieve this, we formulate PO as minimizing the negative log-likelihood (NLL) of a probability model. 
Unfortunately, NLL includes a normalization constant that is in the form of an integral, and thus, usually intractable. 
To address this issue, we propose to estimate the normalization constant using a sampling strategy~\citep{naesseth2024elementssequentialmontecarlo}. 
For instance, the ranking noise contrastive estimation applies conditional importance sampling to generate samples from a proposal distribution and uses them to approximate the integral within the normalization constant~\citep{olmin2024connection}. As we will show, these samples can act as dispreferred completions in PO, thus establishing a connection between the proposed NLL estimation and existing PO algorithms. 
Such a connection enables us to apply advanced sampling strategies, from the literature on estimating the normalization constant in NLL, for generating dispreferred completions in PO.

After formulating PO as an NLL minimization problem, we propose to use an advanced sampling strategy, contrastive divergence (CD), to estimate the normalization constant in this NLL. 
CD applies Markov chain Monte Carlo (MCMC) sampling to approximate the gradient of the log-normalization constant \citep{hinton2002training}. 
The central component in CD is the MC kernel that generates the next sample conditioned on the current one. 
The MC kernel produces samples with high probability mass w.r.t. the probability model, which leads to accurate estimation for the gradient of the log-normalization constant. 
In our PO formulation, we define the probability model to be proportional to the log-likelihood of the target policy. Thus, sampling proportionally to the probability model can be interpreted as selecting a \textit{hard negative}, i.e.,~a dispreferred completion that closely resembles the preferred one, and thus, makes it challenging for the model to distinguish between them. 
We demonstrate the effectiveness of this sampling strategy both theoretically and empirically.

\begin{figure}[t]
\centering
\begin{minipage}{.99\columnwidth}
\centering
    \includegraphics[width = \linewidth]{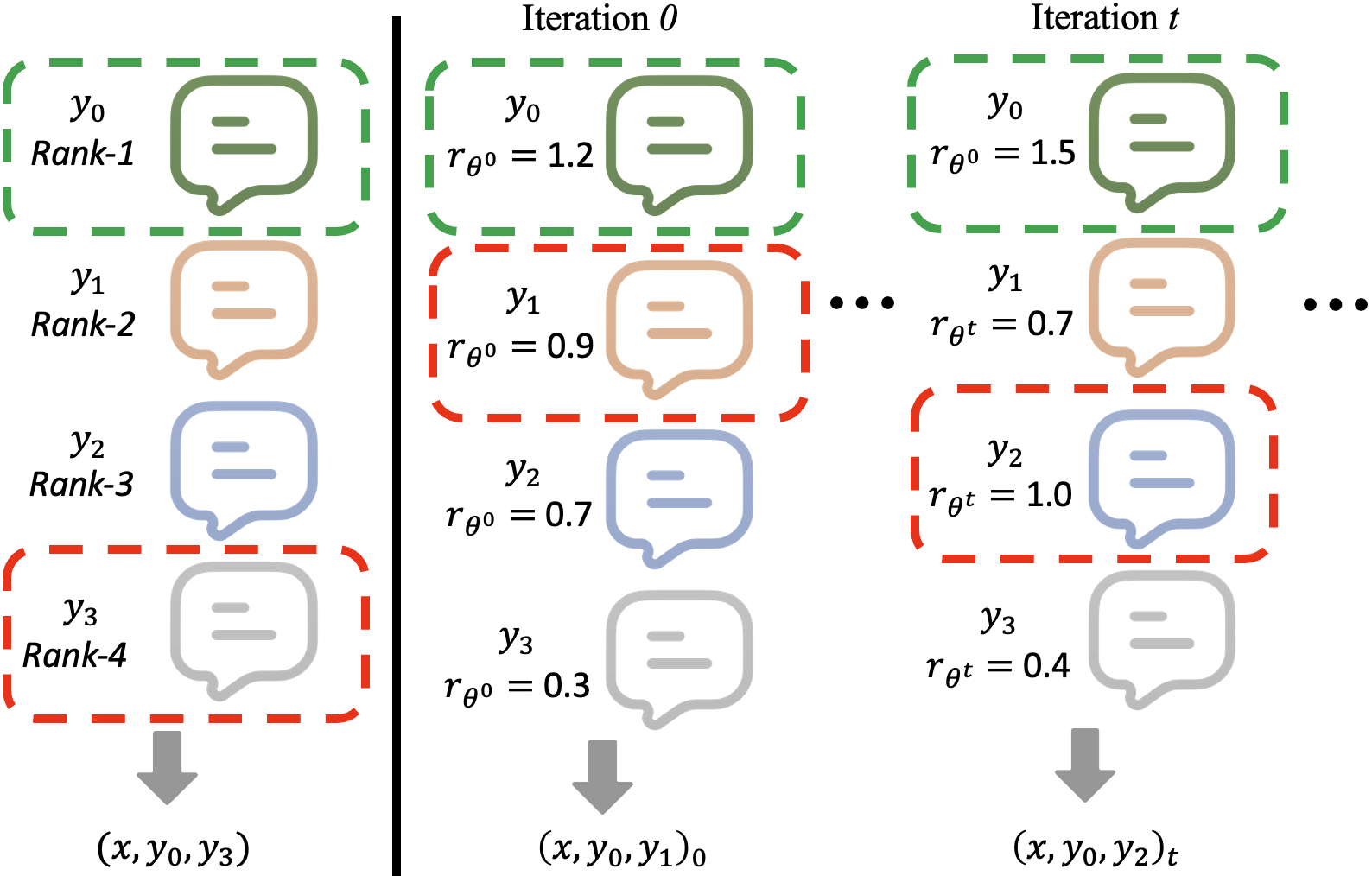}
\end{minipage}
\caption{
\textbf{Left:} existing studies choose a dispreferred completion as the one that maximizes the gap with the preferred completion based on human (or AI) ranked scores.
\textbf{Right:} we propose theoretical guidance to sample dispreferred completion(s) proportionally to the parameterized reward model.
As the parameters evolve during training, the sampling of dispreferred completion changes.
}
\label{fig: high level demonstration}
\end{figure}

We summarize our main contributions below.
\begin{itemize}
    \item 
    We present a novel PO framework that provides theoretical guidance on developing sampling strategies to generate dispreferred completions.
    This is achieved by formulating the alignment problem as NLL estimation and solve it via a sampling strategy.
    \item 
    We propose MC-PO as an offline PO algorithm.
    Given a preference dataset that consists of multiple dispreferred completions for each input prompt,
    MC-PO applies a MC kernel from CD to select \textit{hard negatives} as dispreferred completions proportionally to the log-likelihood of the target policy.
    \item
    Our theoretical analysis suggests that sampling preferred completions from the target policy leads to an unbiased gradient estimation of the normalization constant. 
    Building on this insight, we propose OnMC-PO, an extension of MC-PO to the online settings.
    \item 
    We demonstrate that MC-PO outperforms existing SOTA baselines, and OnMC-PO leads to further improvement.
    Moreover, we numerically validate the effectiveness of various sampling strategies, prove that the MC kernel leads to the optimal model performance.
\end{itemize}

\section{Preference Optimization as NLL Estimation}
\label{sec: po as nll estimation}
In this section,
we revisit the PO objective function (Sec. \ref{sec:background preference optimization}) and formulate it as minimizing the NLL of a probability model (Sec. \ref{sec: preference optimization as nll estimation}).
Then we apply a sampling-based approach to solve this (Sec. \ref{sec: preference optimization via sampling-based solution}).

\subsection{Background: Preference Optimization}
\label{sec:background preference optimization}
RLHF aims to align a target policy $\pi_{\theta}$ with human preference based on an reward model $r(\mat{x}, \mat{y})$.
This optimization problem is formulated as
\begin{equation}
\label{eq: rlhf objective function}
\max \limits_{\pi_{\boldsymbol\theta}}
\mathbb{E}_{\substack{\mat{x} \sim \rho,\\ \mat{y} \sim \pi_{\boldsymbol\theta}(\cdot|x)}}
[r(\mat{x}, \mat{y})]
-
\beta \cdot KL[\pi_{\boldsymbol\theta}(\mat{y} | \mat{x}) || \pi_{\rm ref}(\mat{y} | \mat{x})],
\end{equation}
where $\rho$ represents the distribution over input prompts, 
$\beta$ is a hyper-parameter controlling the deviation from the reference policy $\pi_{\rm ref}$.
The reward model $r(\mat{x}, \mat{y})$ is typically estimated from empirically observed data and cannot accurately represent real-world human preference.
The KL-divergence constraint prevents the model from over-fitting the estimated reward model \citep{skalse2022defining},
as well as maintaining the generation diversity and preventing mode-collapse to single high-reward completions.

Following prior works \citep{go2023aligning},
the closed-form solution to the KL-constrained reward maximization in Eq.~\eqref{eq: rlhf objective function} can be derived as,
\begin{equation}
\label{eq: rlhf optimal solution}
\pi^{\ast}(\mat{y} | \mat{x})
=
\frac{1} {Z(\mat{x})}
\pi_{\rm ref}(\mat{y} | \mat{x}) 
\exp 
\Big(
\frac{1} {\beta} r(\mat{x}, \mat{y})
\Big).
\end{equation}
The partition function $Z(\mat{x})$ ensures that $\pi^{\ast}$ is a valid probability conditioned on any $\mat{x}$.
$Z(\mat{x})$ is typically intractable to compute since the output space is combinatorially large,
which makes this representation hard to utilize in practice.

To estimate $\pi^{\ast}$,
DPO reparameterizes the reward model in terms of the target policy, which enables directly optimizing the target policy
from pairwise preference dataset.
DPO is derived from rearranging the closed-form solution of $\pi^{\ast}$ in Eq.~\eqref{eq: rlhf optimal solution} to express the reward function in terms of its corresponding optimal policy,
then substituting this expression into the Bradley-Terry model \citep{bradley1952rank}.
This yields the DPO objective function,
\begin{equation}
\label{eq:dpo}
\min_{r_{\boldsymbol{\theta}}} 
\underset{(\mat{x}, \mat{y}_0, \mat{y}_1) \sim \mathcal{D}}{\mathbb{E}}
\left[ 
    -\log \sigma\left( 
        \beta r_{\boldsymbol\theta}(\mat{x}, \mat{y}_0) 
        - \beta r_{\boldsymbol\theta}(\mat{x}, \mat{y}_1)
    \right)
\right],
\end{equation}
where $r_{\boldsymbol\theta}(\mat{x}, \mat{y}) := \log \frac{\pi_{\boldsymbol\theta}(\mat{y} | \mat{x})} {\pi_{\rm ref}(\mat{y} | \mat{x})}$ is the parameterized reward model,
$\mat{y}_0$ and $\mat{y}_1$ represent preferred and dispreferred completions, respectively,
$\mathcal{D}$ is the distribution of pairwise preference data.
DPO optimizes the target policy to distinguish between preferred and dispreferred completions, conditioned on the same input prompt.

Existing studies on PO have primarily focused on creating pairwise preference data using heuristic approaches.
In the next, we offer theoretical insights into sampling dispreferred completions by framing PO as NLL estimation.

\subsection{Preference Optimization as NLL Estimation}
\label{sec: preference optimization as nll estimation}
\paragraph{Background: NLL estimation.}
Unnormalized models can be used to approximate complex data distributions.
However,
estimating unnormalized models is not straightforward since the NLL estimation involves the typically intractable normalization constant.
Given some observations from the target distribution,
we seek to approximate it with a parametric probability model,
\begin{equation}
\label{eq: expression for probability distribution}
p_{\boldsymbol\theta}(\mat{y} | \mat{x})
:=
\frac{\tilde{p}_{\boldsymbol\theta}(\mat{y} | \mat{x})}
{Z_{\boldsymbol\theta}(\mat{x})},
\;\;
Z_{\boldsymbol\theta}(\mat{x})
=
\int
\tilde{p}_{\boldsymbol\theta}(\mat{y}' | \mat{x})
d \mat{y}',
\end{equation}
where $\tilde{p}_{\boldsymbol\theta}$ is an unnormalized model and $Z_{\boldsymbol\theta}(\mat{x})$ is its normalization constant.
The NLL estimation minimizes the negative log-likelihood of $p_{\boldsymbol\theta}$ of predicting these observations.
Roughly speaking,
as the number of observations approaches to infinity,
the NLL estimation results in a parametric probability model 
that increasingly approximates the target distribution.

\paragraph{Proposed Formulation: PO as NLL estimation.}
In this work,
we define the following probability model that is closely related to the expression of $\pi^{\ast}$ in Eq.~\eqref{eq: rlhf optimal solution}.
\begin{align}
\label{eq: mle parameterized policy}
& p_{\boldsymbol\theta}(\mat{y} | \mat{x})
: =
\frac{1} {Z_{\boldsymbol\theta}(\mat{x})}
\mu(\mat{y} | \mat{x}) 
\exp\Big(
\beta
r_{\boldsymbol\theta}(\mat{x}, \mat{y})
\Big),
\\ \nonumber
&
Z_{\boldsymbol\theta}(\mat{x})
=
\int
\mu(\mat{y} | \mat{x})
\exp 
\Big(
\beta
r_{\boldsymbol\theta}(\mat{x}, \mat{y})
\Big)
d\mat{y}
,
\\ \nonumber
&
r_{\boldsymbol\theta}(\mat{x}, \mat{y}) = \log \frac{\pi_{\boldsymbol\theta}(\mat{y} | \mat{x})} {\pi_{\rm ref}(\mat{y} | \mat{x})}.
\end{align}
where 
$\mu$ is a proposal distribution that we can sample from.
For any set of parameters $\boldsymbol\theta$,
we assume that $p_{\boldsymbol\theta}$ covers the support of $\pi^{\ast}$, such as $p_{\boldsymbol\theta}(\mat{y} | \mat{x}) > 0$ whenever $\pi^{\ast}(\mat{y} | \mat{x}) > 0$, for all $\mat{x} \sim \rho$. 
In this expression,
the unnormalized model is defined as 
$\tilde{p}_{\boldsymbol\theta}(\mat{y} | \mat{x})
:=
\mu(\mat{y} | \mat{x}) 
\exp\Big(
\beta
r_{\boldsymbol\theta}(\mat{x}, \mat{y})
\Big)$.
To estimate $\pi^{\ast}$ with $p_{\boldsymbol\theta}$,
the NLL estimation minimizes the negative log-likelihood of $p_{\boldsymbol\theta}$ to predict observations sampled from $\pi^{\ast}$,
\begin{align}
\label{eq: nll estimation objective function}
& \boldsymbol\theta^{\ast} = \arg \min \limits_{\boldsymbol\theta}
\mathbb{E}_{\mat{x} \sim \rho, \; \mat{y} \sim \pi^{\ast}(\cdot|x)}
\Big[
\mathcal{L}^{NLL}(\boldsymbol\theta, \mat{x}, \mat{y})
\Big],
\\ \nonumber
& {\rm where}\;
\mathcal{L}^{NLL}(\boldsymbol\theta, \mat{x}, \mat{y})
=
-
\beta
r_{\boldsymbol\theta}(\mat{x}, \mat{y})
+ \log Z_{\boldsymbol\theta}(\mat{x}).
\end{align}
Recall in Eq.~\eqref{eq: mle parameterized policy} that the reward model can be represented in terms of the target policy,
which allows for optimizing the target policy by solving the NLL estimation.

In practice,
the first term of $\mathcal{L}^{NLL}$ in Eq.~\eqref{eq: nll estimation objective function} is typically easy to optimize as the gradient of the target policy (e.g., the LLM) can be computed using existing automatic differentiation software.
However,
optimizing the normalization constant is non-trivial.
In the next,
we focus on sampling-based approaches to estimate the normalization constant.

\subsection{Preference Optimization via Sampling-Based Solution for Its NLL Estimation Formulation}
\label{sec: preference optimization via sampling-based solution}
\paragraph{Proposed: PO via sampling-based solution of NLL estimation.}
Importance sampling applies samples from a proposal distribution to estimate the normalization constant \cite{naesseth2024elementssequentialmontecarlo}.
Ranking noise contrastive estimation (RNCE) \citep{olmin2024connection}, as a more advanced sampling approach, 
utilizes both importance samples and true observations from the target distribution to estimate the intractable term.
Given one observation $\mat{y}_0$ sampled from $\pi^{\ast}$
and $M$ i.i.d. noisy samples from a proposal distribution,
RNCE optimizes to classify as $\mat{y}_0$ coming from the true distribution.

\begin{restatable}{proposition}{rnce}
\label{prop: ranking noise contrastive estimation objective}
Suppose that we have $\mat{y}_0 \sim \pi^{\ast}(\mat{y} | \mat{x})$,
and $M$ noisy samples $\{\mat{y}_i\}_{i=1}^M$,
where each $\mat{y}_i$ is sampled from a proposal distribution, $\mat{y}_i \sim \mu(\mat{y} | \mat{x})$.
Then RNCE approximates the NLL estimation as follows,
\begin{align}
\label{eq: ranking noise contrastive estimation}
& \mathcal{L}^{Sample}(\boldsymbol\theta, \mat{x}, \mat{y}_0)
\\ \nonumber
& =
-
\beta
r_{\boldsymbol\theta}(\mat{x}, \mat{y}_0)
+
\log
{\sum_{i=0}^M \exp
\Big(
\beta
r_{\boldsymbol\theta}(\mat{x}, \mat{y}_i)
\Big)}.
\end{align}
\end{restatable}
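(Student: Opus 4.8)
The plan is to derive \eref{eq: ranking noise contrastive estimation} by realizing RNCE as a self-normalized classification problem in which the intractable partition function cancels. Concretely, I would view the $M+1$ completions $\{\mat{y}_i\}_{i=0}^M$ as a bag in which exactly one element was generated by the target $\pi^{\ast}(\cdot\mid\mat{x})$ and the remaining $M$ by the proposal $\mu(\cdot\mid\mat{x})$, and let $C\in\{0,\dots,M\}$ denote the latent index of the true element. Since $\mat{y}_0$ was drawn from $\pi^{\ast}$ and the rest from $\mu$, the correct label is $C=0$; RNCE trains the model by maximizing the log-posterior the classifier assigns to this label, i.e.\ by the cross-entropy loss $-\log P(C=0\mid\{\mat{y}_i\},\mat{x})$. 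The whole derivation then reduces to computing this posterior under the model $p_{\boldsymbol\theta}$ of \eref{eq: mle parameterized policy}.

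First I would compute the Bayes posterior with a uniform prior $P(C=j)=1/(M+1)$ over which index is true (justified by placing the true sample at a uniformly random position and invoking exchangeability). Conditioned on $C=j$, the bag has density $\pi^{\ast}(\mat{y}_j\mid\mat{x})\prod_{i\neq j}\mu(\mat{y}_i\mid\mat{x})$, so
\[
P(C=j\mid\{\mat{y}_i\},\mat{x})
=
\frac{\pi^{\ast}(\mat{y}_j\mid\mat{x})\prod_{i\neq j}\mu(\mat{y}_i\mid\mat{x})}
{\sum_{k=0}^{M}\pi^{\ast}(\mat{y}_k\mid\mat{x})\prod_{i\neq k}\mu(\mat{y}_i\mid\mat{x})}
=
\frac{\pi^{\ast}(\mat{y}_j\mid\mat{x})/\mu(\mat{y}_j\mid\mat{x})}
{\sum_{k=0}^{M}\pi^{\ast}(\mat{y}_k\mid\mat{x})/\mu(\mat{y}_k\mid\mat{x})},
\]
where the second equality divides numerator and denominator by the common factor $\prod_{i=0}^M\mu(\mat{y}_i\mid\mat{x})$, leaving only density ratios.

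The crux is the next step: replacing the unknown $\pi^{\ast}$ by the parametric model $p_{\boldsymbol\theta}$ and using that, by \eref{eq: mle parameterized policy}, $p_{\boldsymbol\theta}(\mat{y}\mid\mat{x})/\mu(\mat{y}\mid\mat{x})=\exp(\beta r_{\boldsymbol\theta}(\mat{x},\mat{y}))/Z_{\boldsymbol\theta}(\mat{x})$. Because this ratio carries the factor $1/Z_{\boldsymbol\theta}(\mat{x})$ in both numerator and denominator, the intractable partition function cancels exactly, and the posterior collapses to the softmax
\[
P_{\boldsymbol\theta}(C=j\mid\{\mat{y}_i\},\mat{x})
=
\frac{\exp(\beta r_{\boldsymbol\theta}(\mat{x},\mat{y}_j))}
{\sum_{k=0}^{M}\exp(\beta r_{\boldsymbol\theta}(\mat{x},\mat{y}_k))}.
\]
Plugging in the correct label $j=0$ and taking the negative logarithm yields exactly $\mathcal{L}^{Sample}(\boldsymbol\theta,\mat{x},\mat{y}_0)=-\beta r_{\boldsymbol\theta}(\mat{x},\mat{y}_0)+\log\sum_{i=0}^M\exp(\beta r_{\boldsymbol\theta}(\mat{x},\mat{y}_i))$, as claimed.

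Finally, to justify the word ``approximates'' relative to \eref{eq: nll estimation objective function}, I would note that the first term matches $\mathcal{L}^{NLL}$ verbatim, while $\log\sum_{i=0}^M\exp(\beta r_{\boldsymbol\theta}(\mat{x},\mat{y}_i))$ is, up to the additive constant $\log(M+1)$ that is irrelevant to the gradient in $\boldsymbol\theta$, the logarithm of a self-normalized importance-sampling estimate of $Z_{\boldsymbol\theta}(\mat{x})=\mathbb{E}_{\mu}[\exp(\beta r_{\boldsymbol\theta})]$, so $\mathcal{L}^{Sample}\to\mathcal{L}^{NLL}$ as $M\to\infty$ (the lone $\pi^{\ast}$-sample contributes only an $O(1/M)$ term to the average). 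I expect the main obstacle to be conceptual rather than computational: one must set up the latent-source generative model and the uniform prior carefully so that the density-ratio form emerges, since it is precisely that form that makes the $Z_{\boldsymbol\theta}(\mat{x})$-cancellation, and hence tractability, possible; the subsequent algebra is routine.
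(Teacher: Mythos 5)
Your proposal is correct and matches the paper's own proof essentially step for step: the same latent-index classification setup with a uniform prior, the same Bayes-rule reduction to density ratios, and the same cancellation of $Z_{\boldsymbol\theta}(\mat{x})$ after substituting $p_{\boldsymbol\theta}$ for $\pi^{\ast}$, yielding the softmax and cross-entropy form. Your closing remark connecting the log-sum-exp term to a self-normalized importance-sampling estimate of $Z_{\boldsymbol\theta}(\mat{x})$ is a nice justification of the word ``approximates,'' which the paper states in the main text rather than inside the proof.
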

The detailed derivation is in Appendix \ref{sec: RNCE objective function derivation}.
Compared to the NLL estimation in Eq.~\eqref{eq: nll estimation objective function},
RNCE approximates the intractable normalization constant as $Z_{\boldsymbol\theta}(\mat{x})
=
\frac{1}{M+1}
\sum_{i=0}^M \exp
\Big(
\beta
r_{\boldsymbol\theta}(\mat{x}, \mat{y}_i)
\Big)$,
with both $\mat{y}_0$ sampled from $\pi^{\ast}$ and noisy samples $\{\mat{y}_i\}_{i=1}^M$ from a proposal distribution (notice that $\frac{1}{M+1}$ is cancelled by taking the gradient of $\log \mathcal{L}^{Sample}$).
Consequently,
$\mathcal{L}^{Sample}$ is equivalent to a cross-entropy loss
that optimizes the model to classify $\mat{y}_0$ as the correct prediction among all ($M+1$) candidates.

\paragraph{Proposed: Existing PO can be formulated as sampling-based solutions of NLL estimation.}
In the sampling-based solution from Eq.~\eqref{eq: ranking noise contrastive estimation},
we consider the true observation $\mat{y}_0$ as the preferred completion,
and noise samples from the proposal distribution as dispreferred completions.
This leads to an expression of PO as follows, with letting $M=1$,
\begin{align*}
& 
-
\beta
r_{\boldsymbol\theta}(\mat{x}, \mat{y}_0)
+
\log
{\sum_{i=0}^1 \exp
\Big(
\beta
r_{\boldsymbol\theta}(\mat{x}, \mat{y}_i)
\Big)}
\\
&
=
- \log
\frac{
\exp
\Big(
\beta
r_{\boldsymbol\theta}(\mat{x}, \mat{y}_0)
\Big)
}
{
\exp
\Big(
\beta
r_{\boldsymbol\theta}(\mat{x}, \mat{y}_0)
\Big)
+
\exp
\Big(
\beta
r_{\boldsymbol\theta}(\mat{x}, \mat{y}_1)
\Big)
},
\\
& =
- \log \sigma
\Big(
\beta
r_{\boldsymbol\theta}(\mat{x}, \mat{y}_0)
-
\beta
r_{\boldsymbol\theta}(\mat{x}, \mat{y}_1)
\Big),
\end{align*}
where $\sigma$ is the logistic function.
This sampling-based solution with one noise sample is equivalent to DPO  where the noise sample acts as a dispreferred completion (\eref{eq:dpo}).
This provides a novel interpretation on dispreferred completions in existing PO:
dispreferred completions can be understood as importance samples used to estimate the normalization constant in NLL estimation.

Building on this connection, we can adapt various sampling-based algorithms from the NLL estimation literature to generate dispreferred samples for PO. 
These algorithms aim to improve the accuracy of estimating the normalization constant, thereby improving PO performance.
For instance,
in Proposition \ref{prop: ranking noise contrastive estimation objective},
RNCE suggests random sampling to construct dispreferred completions. 
In the sampling based NLL estimation literature, there exist more advanced strategy than RNCE \cite{olmin2024connection}. So in the next section, we develop a more advanced sampling strategy for PO.

\begin{algorithm}[h]
\caption{MC Kernel}
\label{alg: contrastive divergence kernel}
\algorithmicinput \;\; $\mat{x}$, $\mat{y}_0$
\\
\algorithmicdo \;\; Sample $\{\mat{y}_i\}_{i=1}^{L}$ from $\mu$
\\
\algorithmicdo \;\; Compute $\{ w_i \}_{i=0}^{L}$,
\; $w_i = \frac{\exp 
\big(
\beta
r_{\boldsymbol\theta}(\mat{x}, \mat{y}_i)
\big)} {\sum_{j=0}^L \exp 
\big(
\beta
r_{\boldsymbol\theta}(\mat{x}, \mat{y}_j)
\big)}$
\\
\algorithmicdo \;\; Sample $z \sim {\rm Categorical}\big([w_0, w_1,...,w_{L}]\big)$
\\
\algorithmicoutput \;\; $\mat{y}_z$
\end{algorithm}

\section{Novel Preference Optimization via CD}
\label{sec: preference optimization via contrastive divergence}
Contrastive divergence (CD) uses MCMC methods to estimate the gradient of the log-normalization constant.
CD starts the MCMC sampling from training data rather than a random state, which allows the sampling to converge faster. 
The sampling process involves a small number of MCMC steps (often just one), making it particularly effective for probability models where the normalization constant cannot be easily computed.
CD represents a class of sampling strategies that can be implemented by developing different MC Kernels.
The aforementioned RNCE is a special case of CD with random sampling.
Based on the theoretical foundation that connects PO with sampling-based solutions for NLL estimation,
we first derive a CD algorithm for PO, referred to as MC-PO (Sec. \ref{sec: mcmc preference optimization}).
We then extend MC-PO to an online setting, developing OnMC-PO (Sec. \ref{sec: omcmc preference optimization}).

\subsection{Preference Optimization with MCMC Sampling}
\label{sec: mcmc preference optimization}
To begin with,
we derive the gradient of the NLL estimation in Eq.~\eqref{eq: nll estimation objective function}.
\begin{align}
\label{eq: gradient of mle objective function}
& \nabla_{\boldsymbol\theta} \mathcal{L}^{NLL} (\boldsymbol\theta, \mat{x}, \mat{y})
\\ \nonumber
& =
- \nabla_{\boldsymbol\theta} 
\beta
r_{\boldsymbol\theta}(\mat{x}, \mat{y}_0)
+
\nabla_{\boldsymbol\theta} \log Z_{\boldsymbol\theta}(\mat{x}),
\\ \nonumber
& =
- \nabla_{\boldsymbol\theta} 
\beta
r_{\boldsymbol\theta}(\mat{x}, \mat{y}_0)
+
\mathbb{E}_{p_{\boldsymbol{\theta}}(\mathbf{y} | \mathbf{x})}
\Big[
\nabla_{\boldsymbol\theta} 
\beta
r_{\boldsymbol\theta}(\mat{x}, \mat{y})
\Big].
\end{align}
This gradient term is intractable to compute since it involves an expected value over the probability model $p_{\boldsymbol\theta}$ defined in Eq.~\eqref{eq: mle parameterized policy}.
To address this,
CD applies a MC Kernel $K_{\boldsymbol\theta}(\mat{y}' | \mat{x}, \mat{y})$ to estimate the gradient of the log-normalization constant.
the MC Kernel generates samples with high likelihood from $p_{\boldsymbol\theta}$ via a proposal distribution.

We consider the MC Kernel defined in Algorithm \ref{alg: contrastive divergence kernel}.
Given a proposal distribution $\mu$,
this kernel is initialized at a pair of $\mat{x}$ and $\mat{y}_0$ sampled from $\pi^{\ast}$\footnote{As in Sec. \ref{sec: preference optimization via sampling-based solution}, $\mat{y}_0$ can be replace with a preferred completion.}.
At the initial step of the MCMC chain,
it first generates $L$ samples from the proposal distribution,
then it samples the output $\mat{y}'$ from a softmax distribution computed from the unnormalized model over all $L$ samples.
At the next iteration,
this kernel computation is repeated with an initialization of $\mat{y}'$ being the sampled output from the previous step.
The MC Kernel aims to generate a sample with high estimated reward from a proposal distribution.

\paragraph{Proposed: CD samples hard negatives for PO.}
We first connect CD with RNCE and discuss the sampling strategy suggested by CD.
Then we apply this sampling to PO.

\begin{restatable}{lemma}{cdgeneralizernce}
\label{prop: cd generalizes rnce}
When CD runs the MCMC chain for only a single step,
it shares the same objective function with RNCE in Eq.~\eqref{eq: ranking noise contrastive estimation}.
\end{restatable}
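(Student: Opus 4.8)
The plan is to establish the equivalence at the level of gradients: two objectives that induce identical parameter gradients differ only by a $\boldsymbol\theta$-independent constant, so I will read ``the same objective function'' as ``the same gradient with respect to $\boldsymbol\theta$.'' The two objects I must reconcile are, on one side, the single-step CD gradient estimate built from the MC Kernel of Algorithm~\ref{alg: contrastive divergence kernel}, and on the other, the gradient of the RNCE loss $\mathcal{L}^{Sample}$ from Proposition~\ref{prop: ranking noise contrastive estimation objective}. The starting point is Eq.~\eqref{eq: gradient of mle objective function}, where the NLL gradient is $-\nabla_{\boldsymbol\theta}\beta r_{\boldsymbol\theta}(\mat{x}, \mat{y}_0) + \mathbb{E}_{p_{\boldsymbol\theta}(\mat{y}|\mat{x})}[\nabla_{\boldsymbol\theta}\beta r_{\boldsymbol\theta}(\mat{x}, \mat{y})]$; CD's sole purpose is to replace the intractable expectation in the second term by a sample produced by the MC Kernel.

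Next I would instantiate the single-step CD estimator. With the chain length set to one, the MC Kernel is initialized at $(\mat{x}, \mat{y}_0)$, draws $\{\mat{y}_i\}_{i=1}^{L}$ from the proposal $\mu$, forms the softmax weights $w_i = \exp(\beta r_{\boldsymbol\theta}(\mat{x}, \mat{y}_i)) / \sum_{j=0}^{L}\exp(\beta r_{\boldsymbol\theta}(\mat{x}, \mat{y}_j))$ over the candidate pool $\{\mat{y}_0, \dots, \mat{y}_L\}$, and returns $\mat{y}_z$ with $z \sim \mathrm{Categorical}([w_0,\dots,w_L])$. CD then approximates $\nabla_{\boldsymbol\theta}\log Z_{\boldsymbol\theta}(\mat{x})$ by the reward gradient evaluated at the kernel output, yielding the stochastic estimate $-\nabla_{\boldsymbol\theta}\beta r_{\boldsymbol\theta}(\mat{x}, \mat{y}_0) + \nabla_{\boldsymbol\theta}\beta r_{\boldsymbol\theta}(\mat{x}, \mat{y}_z)$. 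Taking the expectation over the kernel's categorical draw (with the sampling distribution frozen at the current $\boldsymbol\theta$), the second term becomes $\sum_{i=0}^{L} w_i\,\nabla_{\boldsymbol\theta}\beta r_{\boldsymbol\theta}(\mat{x}, \mat{y}_i)$.

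I would then differentiate the RNCE loss directly. With $M = L$ noise samples, $\nabla_{\boldsymbol\theta}\mathcal{L}^{Sample}$ equals $-\nabla_{\boldsymbol\theta}\beta r_{\boldsymbol\theta}(\mat{x}, \mat{y}_0)$ plus the gradient of the log-sum-exp term; by the standard softmax derivative the latter is exactly $\sum_{i=0}^{L} w_i\,\nabla_{\boldsymbol\theta}\beta r_{\boldsymbol\theta}(\mat{x}, \mat{y}_i)$ with the same weights $w_i$. Comparing with the previous paragraph shows the expected single-step CD gradient and the RNCE gradient are term-for-term identical, which proves the claim.

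The main obstacle is conceptual rather than computational: the MC Kernel is a sampling procedure, not an explicit loss, so the equivalence holds only in expectation over the categorical draw and at the level of gradients rather than function values. I must therefore make two points precise. First, the softmax weights of the kernel coincide with the normalized importance weights that arise when differentiating RNCE's log-sum-exp, so the two weighted sums match even though in one case the weights come from sampling and in the other from the softmax Jacobian. Second, including the datum $\mat{y}_0$ as index $0$ in the candidate pool is exactly what aligns CD's candidate set with RNCE's $\{\mat{y}_0,\dots,\mat{y}_M\}$; without initializing the chain at the training point $\mat{y}_0$, the two objectives would not coincide.
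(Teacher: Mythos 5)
Your proposal is correct and follows essentially the same route as the paper: both interpret the claim at the gradient level, start from Eq.~\eqref{eq: gradient of mle objective function}, and show that CD's estimate of $\nabla_{\boldsymbol\theta}\log Z_{\boldsymbol\theta}(\mat{x})$ reduces to the softmax-weighted sum $\sum_{i=0}^{M} w_i\,\nabla_{\boldsymbol\theta}\beta r_{\boldsymbol\theta}(\mat{x},\mat{y}_i)$, which is exactly $\nabla_{\boldsymbol\theta}\log\sum_{i=0}^{M}\exp\big(\beta r_{\boldsymbol\theta}(\mat{x},\mat{y}_i)\big)$, i.e., the gradient of the RNCE log-sum-exp term. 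The only cosmetic difference is that the paper presents this weighted sum as a self-normalized importance-sampling approximation of $\mathbb{E}_{p_{\boldsymbol\theta}}[\nabla_{\boldsymbol\theta}\beta r_{\boldsymbol\theta}]$, while you obtain it as the exact expectation of the single-sample CD estimate over the kernel's categorical draw --- the same identity viewed two ways.
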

The detailed derivation is in Appendix \ref{sec: cd generalizes rnce}.
Given a true observation $\mat{y}_0$ and noisy samples $\{\mat{y}_i\}_{i=1}^M$, the objective functions of CD and RNCE are equivalent under a special condition. 
However, the MC Kernel from CD, as outlined in Algorithm \ref{alg: contrastive divergence kernel}, suggests to sample in proportion to the reward model. 
This leads to more accurate NLL estimation.
Specifically, the gradient of the log-normalization constant in Eq.~\eqref{eq: gradient of mle objective function} is represented as the expected value over the probability model. 
By sampling in proportion with the reward model, it effectively generates samples with higher probability mass from the probability model, thereby improving the coverage of the distribution in the expected value.

Recall the connection between CD and RNCE established in Sec. \ref{sec: preference optimization via sampling-based solution},
existing PO can be formulated as CD.
CD leads to improved accuracy by sampling proportionally to the reward model,
which suggests to choose hard negatives with high estimated rewards for PO.
Here we show that PO benefits from hard negatives.
\begin{restatable}{lemma}{dpogradient}
\label{prop: dpo gradient}
Let $M=1$,
the gradient of the sampling-based objective in Eq.~\eqref{eq: ranking noise contrastive estimation} can be derived as follows,
\begin{align*}
\nabla_{\boldsymbol\theta} \mathcal{L}^{Sample}(\boldsymbol\theta, \mat{x}, \mat{y}_0)
& =
-
\beta
\sigma
\Big(
\beta
r_{\boldsymbol\theta}(\mat{x}, \mat{y}_1)
-
\beta
r_{\boldsymbol\theta}(\mat{x}, \mat{y}_0)
\Big)
\\ \nonumber
& \;\;\;\;
\nabla_{\boldsymbol\theta}
\Big(
r_{\boldsymbol\theta}(\mat{x}, \mat{y}_0)
-
r_{\boldsymbol\theta}(\mat{x}, \mat{y}_1)
\Big),
\end{align*}
where $\mat{y}_0$ and $\mat{y}_1$ are preferred and dispreferred completions, respectively.
\end{restatable}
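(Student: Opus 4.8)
The plan is to directly differentiate the single-noise-sample objective
$\mathcal{L}^{Sample}(\boldsymbol\theta, \mat{x}, \mat{y}_0)$ from Eq.~\eqref{eq: ranking noise contrastive estimation}, which for $M=1$ reduces to the DPO-style expression already derived in the excerpt:
$
\mathcal{L}^{Sample}
=
-\log \sigma\big(
\beta r_{\boldsymbol\theta}(\mat{x}, \mat{y}_0)
-
\beta r_{\boldsymbol\theta}(\mat{x}, \mat{y}_1)
\big).
$
First I would set $u := \beta r_{\boldsymbol\theta}(\mat{x}, \mat{y}_0) - \beta r_{\boldsymbol\theta}(\mat{x}, \mat{y}_1)$ so that the loss is just $-\log\sigma(u)$, a single-variable composition whose derivative is completely routine.

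The key calculational fact is the identity $\frac{d}{du}\big[-\log\sigma(u)\big] = -(1-\sigma(u)) = -\sigma(-u)$, which follows from $\sigma'(u) = \sigma(u)(1-\sigma(u))$ together with $1-\sigma(u) = \sigma(-u)$. Applying the chain rule with $\nabla_{\boldsymbol\theta} u = \beta\,\nabla_{\boldsymbol\theta}\big(r_{\boldsymbol\theta}(\mat{x}, \mat{y}_0) - r_{\boldsymbol\theta}(\mat{x}, \mat{y}_1)\big)$, I obtain
\begin{align*}
\nabla_{\boldsymbol\theta}\mathcal{L}^{Sample}
&=
-\sigma(-u)\,\nabla_{\boldsymbol\theta} u
\\
&=
-\beta\,\sigma\big(
\beta r_{\boldsymbol\theta}(\mat{x}, \mat{y}_1)
-
\beta r_{\boldsymbol\theta}(\mat{x}, \mat{y}_0)
\big)
\nabla_{\boldsymbol\theta}\big(
r_{\boldsymbol\theta}(\mat{x}, \mat{y}_0)
-
r_{\boldsymbol\theta}(\mat{x}, \mat{y}_1)
\big),
\end{align*}
where the sign flip inside $\sigma$ comes precisely from rewriting $-u$ as $\beta r_{\boldsymbol\theta}(\mat{x}, \mat{y}_1) - \beta r_{\boldsymbol\theta}(\mat{x}, \mat{y}_0)$. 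This matches the claimed expression exactly.

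Honestly, there is no real obstacle here: the statement is an elementary gradient computation, and the only thing requiring any care is bookkeeping the sign of the argument of $\sigma$ so that the scalar prefactor reads $\sigma\big(\beta r_{\boldsymbol\theta}(\mat{x}, \mat{y}_1) - \beta r_{\boldsymbol\theta}(\mat{x}, \mat{y}_0)\big)$ rather than its complement. The conceptual point the lemma is setting up — rather than the proof itself — is the interpretive one: the prefactor $\sigma\big(\beta r_{\boldsymbol\theta}(\mat{x}, \mat{y}_1) - \beta r_{\boldsymbol\theta}(\mat{x}, \mat{y}_0)\big)$ is the weight the gradient assigns to each example, and it is largest exactly when the dispreferred completion $\mat{y}_1$ has a high reward relative to the preferred $\mat{y}_0$. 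I would therefore close the proof by noting that this weight grows with the reward of the dispreferred sample, which is the formal sense in which a \emph{hard negative} contributes a larger gradient and thus drives more learning, connecting back to the CD-based sampling motivation from the preceding discussion.
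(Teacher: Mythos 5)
Your proposal is correct and follows essentially the same route as the paper's proof: rewrite the $M=1$ objective as $-\log \sigma\big(\beta r_{\boldsymbol\theta}(\mat{x}, \mat{y}_0) - \beta r_{\boldsymbol\theta}(\mat{x}, \mat{y}_1)\big)$, differentiate via the chain rule using $\sigma'(u) = \sigma(u)(1-\sigma(u))$, and apply the identity $1-\sigma(u)=\sigma(-u)$ to obtain the stated prefactor. Your substitution $u := \beta r_{\boldsymbol\theta}(\mat{x}, \mat{y}_0) - \beta r_{\boldsymbol\theta}(\mat{x}, \mat{y}_1)$ and the closing interpretive remark about hard negatives are cosmetic additions, not a different argument.
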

The derivation is in Appendix \ref{sec: dpo gradient derivation}.
When the estimated reward for a dispreferred completion exceeds that of a preferred one, this results in a larger gradient magnitude, leading a more effective update of the target policy $\pi_{\boldsymbol\theta}$. 
The MC Kernel, as outlined in Algorithm \ref{alg: contrastive divergence kernel}, aims to sample in proportion to the estimated reward model to achieve this.

\paragraph{Practical Implementation for MC-PO.}
We propose MC-PO as an offline PO algorithm.
For efficiency,
MC-PO runs the MCMC chain for only a single step.
To implement the MC Kernel in Algorithm \ref{alg: contrastive divergence kernel},
we consider a preference dataset that consists of $L$ candidate completions for each input prompt.
During training,
the MC Kernel only computes the weights that depend on the target policy $\pi_{\boldsymbol\theta}$ and samples based on the categorical distribution.
The kernel computation is fast as it is independent from computing the gradient for parameter updates.

\subsection{MCMC for Online Preference Optimization}
\label{sec: omcmc preference optimization}
\paragraph{Online MC-PO leads to an unbiased gradient estimator.}
Having an unbiased gradient estimation is a standard condition to guarantee general convergence of stochastic gradient descent \citep{bottou2018optimization}.
We demonstrate that in an online setting where the true observation is sampled from the probability model, rather than from the target distribution, 
then the CD estimation of the gradient of log-normalization in Eq.~\eqref{eq: gradient of mle objective function} is an unbiased estimator.
\begin{restatable}{proposition}{UnbiasedMLE}
\label{prop: unbiased mle estimator}
Let 
$\hat{Z}_{\boldsymbol\theta}(\mat{x})$
be an estimation of the normalization constant,
\begin{equation*}
\hat{Z}_{\boldsymbol\theta}(\mat{x})
=
\frac{1}{M + 1}
{\sum_{i=0}^M \exp
\Big(
\beta
r_{\boldsymbol\theta}(\mat{x}, \mat{y}_i)
\Big)}.
\end{equation*}
When $\mat{y}_0$ is sampled from the probability model $p_{\boldsymbol\theta}$,
then
\begin{equation*}
\mathbb{E}_{p_{\boldsymbol\theta}(\mat{y}_0 | \mat{x}) 
\mu(\{\mat{y}_i\}_{i=1}^M | \mat{x}))
}
\Big[
\nabla_{\boldsymbol\theta}
\log
\hat{Z}(\mat{x})
\Big]
=
\nabla_{\boldsymbol\theta} \log Z_{\boldsymbol\theta}(\mat{x}),
\end{equation*}
where $\mu(\{\mat{y}_i\}_{i=1}^M | \mat{x})) = \prod_{i=1}^M \mu(\mat{y}_i | \mat{x})$.
\end{restatable}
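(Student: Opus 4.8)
The plan is to show that the expectation of the gradient of the log-estimator equals the true gradient of the log-normalization constant, which by \eref{eq: gradient of mle objective function} equals $\mathbb{E}_{p_{\boldsymbol\theta}(\mat{y}|\mat{x})}[\nabla_{\boldsymbol\theta} \beta r_{\boldsymbol\theta}(\mat{x}, \mat{y})]$. First I would compute $\nabla_{\boldsymbol\theta} \log \hat{Z}_{\boldsymbol\theta}(\mat{x})$ directly. Since the constant $\frac{1}{M+1}$ vanishes under the logarithmic derivative, we have
\begin{align*}
\nabla_{\boldsymbol\theta} \log \hat{Z}_{\boldsymbol\theta}(\mat{x})
&=
\frac{\sum_{i=0}^M \exp(\beta r_{\boldsymbol\theta}(\mat{x}, \mat{y}_i)) \nabla_{\boldsymbol\theta} \beta r_{\boldsymbol\theta}(\mat{x}, \mat{y}_i)}
{\sum_{j=0}^M \exp(\beta r_{\boldsymbol\theta}(\mat{x}, \mat{y}_j))}
=
\sum_{i=0}^M w_i \nabla_{\boldsymbol\theta} \beta r_{\boldsymbol\theta}(\mat{x}, \mat{y}_i),
\end{align*}
where $w_i$ are exactly the normalized softmax weights appearing in the MC Kernel of Algorithm \ref{alg: contrastive divergence kernel}. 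This rewrites the quantity of interest as a single weighted sum over all $M+1$ samples.

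Next I would take the expectation over the joint sampling distribution $p_{\boldsymbol\theta}(\mat{y}_0 | \mat{x}) \prod_{i=1}^M \mu(\mat{y}_i | \mat{x})$. The key structural observation is a symmetry/exchangeability argument: although $\mat{y}_0$ is drawn from $p_{\boldsymbol\theta}$ and the remaining $\mat{y}_1, \dots, \mat{y}_M$ from $\mu$, the product of the sampling density and the softmax weight $w_i$ should collapse to a clean form. Concretely, writing $p_{\boldsymbol\theta}(\mat{y}_0|\mat{x}) = \mu(\mat{y}_0|\mat{x}) \exp(\beta r_{\boldsymbol\theta}(\mat{x},\mat{y}_0)) / Z_{\boldsymbol\theta}(\mat{x})$, the factor $\exp(\beta r_{\boldsymbol\theta}(\mat{x},\mat{y}_0))$ from the $\mat{y}_0$-density combines with the denominator $\sum_j \exp(\beta r_{\boldsymbol\theta}(\mat{x},\mat{y}_j))$ inside each $w_i$. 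The plan is to recognize that each of the $M+1$ terms, once its sample is integrated against its respective proposal density and the $\mat{y}_0$ reweighting is applied, contributes identically by the i.i.d.\ structure of $\mat{y}_1, \dots, \mat{y}_M$ and the symmetry restored by the $p_{\boldsymbol\theta}$ draw of $\mat{y}_0$.

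The hard part will be handling the $i=0$ term separately from the $i\geq 1$ terms and verifying that after multiplying by the density $p_{\boldsymbol\theta}(\mat{y}_0|\mat{x})$ all indices become interchangeable. I would proceed by fixing a single index, say integrating out the sample carried by the $w_i$ factor, and show that
\begin{equation*}
\mathbb{E}\Big[ w_i \nabla_{\boldsymbol\theta} \beta r_{\boldsymbol\theta}(\mat{x}, \mat{y}_i) \Big]
=
\frac{1}{M+1} \mathbb{E}_{p_{\boldsymbol\theta}(\mat{y}|\mat{x})} \Big[ \nabla_{\boldsymbol\theta} \beta r_{\boldsymbol\theta}(\mat{x}, \mat{y}) \Big]
\end{equation*}
holds for every $i$, so that summing over the $M+1$ indices produces exactly $\mathbb{E}_{p_{\boldsymbol\theta}(\mat{y}|\mat{x})}[\nabla_{\boldsymbol\theta} \beta r_{\boldsymbol\theta}(\mat{x}, \mat{y})]$. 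The crucial step is a change of variables that re-labels which sample plays the role of the $p_{\boldsymbol\theta}$-draw; the $\exp(\beta r_{\boldsymbol\theta})/Z_{\boldsymbol\theta}$ reweighting of $\mat{y}_0$ is precisely what converts a uniformly-weighted $\mu$-average into a $p_{\boldsymbol\theta}$-average, and the denominators cancel consistently across terms. Finally I would combine this with the gradient identity in \eref{eq: gradient of mle objective function} to conclude that the estimator is unbiased.
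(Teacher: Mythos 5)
Your high-level route (rewriting $\nabla_{\boldsymbol\theta}\log\hat{Z}_{\boldsymbol\theta}(\mat{x})=\sum_{i=0}^M w_i\,\nabla_{\boldsymbol\theta}\beta r_{\boldsymbol\theta}(\mat{x},\mat{y}_i)$, then exploiting the cancellation between the $\exp(\beta r_{\boldsymbol\theta}(\mat{x},\mat{y}_0))/Z_{\boldsymbol\theta}$ factor in the $\mat{y}_0$-density and the softmax denominator) is in the same spirit as the paper's proof. However, the concrete step you reduce everything to is false: the per-index identity
$\mathbb{E}\big[w_i\nabla_{\boldsymbol\theta}\beta r_{\boldsymbol\theta}(\mat{x},\mat{y}_i)\big]=\frac{1}{M+1}\,\mathbb{E}_{p_{\boldsymbol\theta}(\mat{y}|\mat{x})}\big[\nabla_{\boldsymbol\theta}\beta r_{\boldsymbol\theta}(\mat{x},\mat{y})\big]$
does \emph{not} hold for every $i$ under the actual sampling law $p_{\boldsymbol\theta}(\mat{y}_0|\mat{x})\prod_{i=1}^M\mu(\mat{y}_i|\mat{x})$, because index $0$ is not exchangeable with the others under that law. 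Concretely, take $M=1$, a two-point space $\{a,b\}$ with $\mu$ uniform, $\exp(\beta r_{\boldsymbol\theta}(\mat{x},a))=2$, $\exp(\beta r_{\boldsymbol\theta}(\mat{x},b))=1$, and a one-dimensional gradient with $\nabla_{\boldsymbol\theta}\beta r_{\boldsymbol\theta}(\mat{x},a)=1$, $\nabla_{\boldsymbol\theta}\beta r_{\boldsymbol\theta}(\mat{x},b)=0$. Then $p_{\boldsymbol\theta}(a|\mat{x})=2/3$, $\nabla_{\boldsymbol\theta}\log Z_{\boldsymbol\theta}(\mat{x})=2/3$, and a direct enumeration of the four outcomes gives
\begin{equation*}
\mathbb{E}\big[w_0\,\nabla_{\boldsymbol\theta}\beta r_{\boldsymbol\theta}(\mat{x},\mat{y}_0)\big]=\tfrac{7}{18},
\qquad
\mathbb{E}\big[w_1\,\nabla_{\boldsymbol\theta}\beta r_{\boldsymbol\theta}(\mat{x},\mat{y}_1)\big]=\tfrac{5}{18},
\end{equation*}
neither of which equals $\frac{1}{2}\cdot\frac{2}{3}=\frac{6}{18}$; only their sum recovers $2/3$. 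So the step you flag as the crux ("holds for every $i$") would fail, even though the proposition itself is true.

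The repair is exactly where the paper puts its effort. The exchangeability you appeal to does not exist term by term under your law; it only appears after marginalizing over \emph{which} index carries the $p_{\boldsymbol\theta}$-draw, and it can only be transferred to your law through a function that is permutation-symmetric in $(\mat{y}_0,\dots,\mat{y}_M)$ --- which the full sum $\nabla_{\boldsymbol\theta}\log\hat{Z}_{\boldsymbol\theta}(\mat{x})$ is, but the individual summands are not. The paper's Lemma~\ref{lemma: appendix probability simplification} computes that exchangeable marginal explicitly, $P(\{\mat{y}_i\}_{i=0}^M)=\frac{\hat{Z}_{\boldsymbol\theta}(\mat{x})}{Z_{\boldsymbol\theta}(\mat{x})}\prod_{j=0}^M\mu(\mat{y}_j|\mat{x})$; since the integrand $\nabla_{\boldsymbol\theta}\log\hat{Z}_{\boldsymbol\theta}$ does not depend on the latent index $z$, its expectation under $p_{\boldsymbol\theta}(\mat{y}_0|\mat{x})\prod_{i\geq1}\mu(\mat{y}_i|\mat{x})$ equals its expectation under this marginal, and \emph{there} each of the $M+1$ terms does contribute $\frac{1}{M+1}\nabla_{\boldsymbol\theta}\log Z_{\boldsymbol\theta}(\mat{x})$: the $\hat{Z}_{\boldsymbol\theta}$ in the density cancels the $\hat{Z}_{\boldsymbol\theta}$ in the softmax denominator, precisely the cancellation you describe, but it is only licensed after the symmetrization. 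Your proof becomes correct if you insert this marginalization step before splitting into per-index terms.
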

The detailed derivation is in Appendix \ref{sec: unbiased mle estimator}.
This explains the clear advantage of online methods over offline methods \citep{tang2024understanding}.
Specifically,
online PO algorithms generate preferred completions from the target policy that is proportional to the probability model $p_{\boldsymbol\theta}$,
intends to have an unbiased gradient estimation.

\paragraph{Practical implementation for OnMC-PO.}
As suggested by Proposition \ref{prop: unbiased mle estimator},
it is desirable to sample the preferred completion from the probability model $p_{\boldsymbol\theta}$.
We implement online MC-PO (OnMC-PO) as an extension of MC-PO.
Given a input prompt,
we sample multiple completions from the target policy $\pi_{\boldsymbol\theta}$ and identify the most preferred one as $\mat{y}_0$.
Moreover,
since the policy update at each step is relatively small,
we consider a batched online algorithm \citep{rosset2024direct} where sampling from $\pi_{\boldsymbol\theta}$ is done after a number of gradient updates.

\section{Related Works}
\label{sec: related works}
Aligning LLMs with human preferences has predominately been considered as an RL problem \citep{ouyang2022training}.
However,
the on-policy nature of RLHF necessitates learning a reward model from preference data first,
then maximize the estimated reward model with RL techniques,
leading to a two-stage optimization process \citep{schulman2017proximal}.
Recent developments in preference-based alignment techniques have streamlined this process \citep{rafailov2024direct,azar2024general}.
They enable direct model alignment through a singular loss.
We categorize existing DPO variants as contrastive-based or classification-based approaches according to their objective functions.
Contrastive-based approaches maximize the difference of the predicted likelihoods between preferred and dispreferred completions,
while classification-based approaches conduct maximization on the preferred and minimization on dispreferred completions, respectively.

Some notable contastive-based algorithms include 
DPO \citep{rafailov2024direct}
that is derived from reparametrizing the reward function in RLHF to directly learn a policy from preference data.
IPO \citep{azar2024general} that replaces the logistic loss with a squared loss to address the shortcomings of Bradely-Terry preference modeling in cases where preference data are highly deterministic.
SimPO \citep{meng2024simpo} introduces length regularization on the log-probabilities of both preferred and dispreferred completions, eliminating the need for a reference model. 
RPO \citep{liu2024provably} that derives a superivsed next-word prediction regularization to prevent the decrease of the likelihood to predict preferred completions.
The first classification-based algorithms is
KTO \citep{ethayarajh2024kto} that formulate both maximization and minimization objectives w.r.t. a reference point.
BCO \citep{jung2024binary}
derives the reference point that minimizes the gap with DPO.
NCA \citep{chen2024noise} is derived from noise contrastive estimation for working with reward data \citep{gutmann2010noise}.

In this work, we formulate the alignment problem as sampling-based solutions to solve NLL estimation. 
We first propose the RNCE based sampling as a general PO solution that randomly selects dispreferred completions from a set of candidates.
This solution is similar to InfoNCA \citep{chen2024noise} that generalizes DPO to adopt multiple dispreferred completions. 
Different from InfoNCA, our proposed NLL estimation perspective of model alignment interprets dispreferred completions as the estimative samples to compute the normalization constant, which provides theoretical guidance on developing sampling strategies to choose dispreferred completions for PO.
Based on the NLL estimation formulation, we further develop MC-PO that uses an MCMC kernel to select high-quality dispreferred completions, leading to improved model performance.

\section{Numerical Experiments}
\label{sec: numerical experiments}
In this section, we present main results of our experiments, highlighting the effectiveness of MC-PO and OnMC-PO on various benchmarks (Sec. \ref{sec: main results}) and providing an in-depth understanding on the effect of sampling strategies (Sec. \ref{sec: experiments on sampling strategies}).
More extensive results can be found in Appendix \ref{appendix: extensive experimental results}.

\subsection{Experimental Setup}
\label{sec: experimental setup}
We summarize experimental setting here, more details can be found in Appendix \ref{appendix: experimental setup}.
\paragraph{Models and datasets.}
We perform PO under \textbf{three} different setups: \textbf{ (1) The base setup} considers the
\href{https://huggingface.co/allenai/Llama-3.1-Tulu-3-8B-SFT}{Llama-3.1-8B-SFT} model, which has been fine-tuned using supervised next-word prediction on the TÜLU 3 SFT Mix dataset \citep{lambert2024t},
and {\href{https://huggingface.co/HuggingFaceH4/mistral-7b-sft-beta}{Mistral-7B-SFT}}.
We fine-tune these models on the \href{https://huggingface.co/datasets/berkeley-nest/Nectar}{Nectar} dataset \citep{zhu2023starling}.
The Nectar dataset consists of $7$ ranked completions per input prompt generated by different LLMs,
which creates both high-quality and diverse candidate completions for sampling.
For each input prompt,
we consider the rank-$1$ completion as the preferred completion and subsequently eliminate the rank-$2$ completion to minimize noises in preference pairs. 
From the remaining $5$ candidates, we then randomly select a dispreferred completion.
\textbf{ (2) The instruct setup} uses the off-the-shelf instruction-tuned
\href{https://huggingface.co/meta-llama/Llama-3.1-8B-Instruct}{Llama-3.1-8B-Instruct} model \citep{dubey2024llama} to initialize the target policy $\pi_{\boldsymbol\theta}$.
This model has undergone extensive instruction-tuning processes, making it more expressive compared to the initialization model in the base setup.
We use prompts from the \href{https://huggingface.co/datasets/HuggingFaceH4/ultrafeedback_binarized}{UltraFeedback} dataset \citep{cui2023ultrafeedback} to regenerate the preferred and dispreferred completions using the \href{https://huggingface.co/meta-llama/Llama-3.1-8B-Instruct}{Llama-3.1-8B-Instruct} model.
This makes the instruct setup closer to an on-policy setting \citep{tang2024understanding}.
Specifically,
we generate $6$ completions using temperatures of $0.6$, $0.8$, and $1$ for each input prompt.
Then, we apply the iterative pairwise ranking approach \citep{chen2024towards} with the
\href{https://huggingface.co/meta-llama/Llama-3.1-70B-Instruct}{Llama-3.1-70B-Instruct} to select the most preferred completion and randomly sample a dispreferred completion from remaining candidates.
\textbf{ (3) The batched online setup} is in the middle of the offline and purely online setups \citep{schulman2017proximal, lambert2024t}, striking a balance between efficiency and adaptability.
We equally split the training steps into three batches and regenerate the preference data following the instruct setup using the current model checkpoint.
This approach is more efficient than a purely online setting \citep{qi2024online}, as initializing the inference  is often computationally expensive \citep{kwon2023efficient}.

\paragraph{Training.}
All training jobs are done using full-parameter tuning. 
We fix the effective batch size of $128$ and the number of training epochs of $2$.
Hyperparameter optimization is conducted using $7$ different learning rates.
All results are reported as the average of the final checkpoints across $3$ random seeds, along with the standard deviation,
which can effectively reduce numerical randomness \citep{miller2024adding}.
Each training job is done on a node of $8\cdot$A100 GPUs.

\paragraph{Evaluations.}
To evaluate the performance of aligned models,
we use two popular open-ended instruction-following benchmarks:
AlpacaEval $2$ \citep{dubois2024length} and Arena-Hard \citep{li2024live}.
These benchmarks assess the model's versatile conversational capabilities across a wide range of queries and have been widely adopted by the community.
We focus on winrate as evaluation metric.
Let $N_{\rm cand}$, $N_{\rm base}$ and $N_{\rm tie}$ be the number of candidate wins, baseline wins and ties, respectively.
The adjusted winrate is computed as 
\begin{equation*}
{\rm Winrate}
:=
\frac{N_{\rm cand} + N_{\rm tie} / 2} 
{N_{\rm cand} + N_{\rm base} + N_{\rm tie}}.
\end{equation*}
All winrate-based evaluations are done using \href{https://huggingface.co/mistralai/Mistral-Large-Instruct-2407}{Mistral-Large-Instruct-2407  } as the model judge.

\subsection{Main Results: Comparing with SOTA PO}
\label{sec: main results}
\begin{table*}[h!]
\centering
\begin{tabular}{l|cc|cc|cc}
\hline
Model & \multicolumn{2}{|c|}{\href{https://huggingface.co/HuggingFaceH4/mistral-7b-sft-beta}{Mistral-7B-SFT}} & \multicolumn{2}{|c|}{\href{https://huggingface.co/allenai/Llama-3.1-Tulu-3-8B-SFT}{Llama-3.1-8B-SFT}} & \multicolumn{2}{|c}{\href{https://huggingface.co/meta-llama/Llama-3.1-8B-Instruct}{Llama-3.1-8B-Instruct}} \\
\band Train dataset & \multicolumn{2}{|c|}{Nectar} & \multicolumn{2}{|c|}{Nectar} & \multicolumn{2}{|c}{Ultrafeedback (prompt only)} \\
\band Evaluation & Alpaca & Arena & Alpaca & Arena & Alpaca & Arena \\ \hline
DPO & 25.07($\pm$6.81) & 42.01($\pm$11.88) & 33.74($\pm$2.51) & 60.25($\pm$2.12) & 64.22($\pm$1.01) & 75.88($\pm$0.79) \\
RPO & 15.31($\pm$0.62) & 39.18($\pm$0.49) & 32.50($\pm$0.75) & 59.20($\pm$0.82) & 51.27($\pm$0.50) & 64.74($\pm$0.12) \\
EXO & 21.77($\pm$4.09) & 30.63($\pm$3.55) & 26.48($\pm$3.31) & 52.89($\pm$5.03) & 64.75($\pm$1.72) & 74.93($\pm$0.81) \\
SimPO & 18.62($\pm$2.64) & 48.26($\pm$3.90) & 33.71($\pm$1.41) & 60.69($\pm$1.01) & 54.28($\pm$1.48) & 73.36($\pm$1.38) \\
CPO & 24.27($\pm$0.39) & 49.66($\pm$0.34) & 29.10($\pm$1.01) & 55.25($\pm$0.60) & 65.28($\pm$0.54) & \textcolor{blue}{77.92($\pm$1.78)} \\ \hline
BCO & 23.04($\pm$0.19) & 46.68($\pm$1.62) & 24.96($\pm$1.28) & 58.16($\pm$1.76) & 61.17($\pm$1.27) & 73.45($\pm$0.54) \\
KTO & 22.98($\pm$0.23) & 45.77($\pm$1.85) & 24.50($\pm$1.35) & 53.40($\pm$0.75) & 60.35($\pm$0.67) & 71.19($\pm$0.49) \\
APO & 15.79($\pm$0.78) & 35.94($\pm$0.26) & 21.13($\pm$0.40) & 53.25($\pm$0.82) & 57.54($\pm$0.97) & 70.70($\pm$0.25) \\
SPPO & 12.68($\pm$0.27) & 30.87($\pm$0.67) & 20.26($\pm$0.34) & 53.52($\pm$0.56) & 56.39($\pm$0.58) & 71.73($\pm$0.62) \\
NCA & 17.30($\pm$0.37) & 39.88($\pm$0.80) & 20.46($\pm$0.36) & 53.36($\pm$1.25) & 58.04($\pm$0.42) & 72.40($\pm$0.23) \\ \hline
\band  MC-PO & \textcolor{blue}{30.86($\pm$0.91)} & \textbf{52.75($\pm$2.00)} & 35.84($\pm$0.31) & \textbf{63.77($\pm$0.81)} & 66.90($\pm$0.74) & \textbf{76.71($\pm$0.24)} \\ 
\band OnMC-PO & \textbf{30.52($\pm$0.24)} & \textcolor{blue}{52.90($\pm$0.53)} & \textcolor{blue}{39.70($\pm$0.29)} & \textcolor{blue}{64.21($\pm$0.59)} & \textcolor{blue}{72.63($\pm$0.21)} & \textbf{77.71($\pm$0.85)} \\ \hline
\end{tabular}
\caption{
Performance evaluation of preference-optimized models.
Results are reported as winrate against GPT-4 as baseline.
Each experiment is conducted using three random seeds. 
We report the mean winrate and standard deviation for both \href{https://huggingface.co/datasets/alpaca_eval/alpaca_eval_2}{AlpacaEval $2$} \citep{dubois2024length} and \href{https://huggingface.co/datasets/arena_hard/arena_hard}{Arena-Hard} \citep{li2024live}. 
The models \href{https://huggingface.co/HuggingFaceH4/mistral-7b-sft-beta}{Mistral-7B-SFT} and \href{https://huggingface.co/allenai/Llama-3.1-Tulu-3-8B-SFT}{Llama-3.1-8B-SFT} are trained using the \href{https://huggingface.co/datasets/berkeley-nest/Nectar}{Nectar} dataset, 
\href{https://huggingface.co/meta-llama/Llama-3.1-8B-Instruct}{Llama-3.1-8B-Instruct} is trained using prompts from the \href{https://huggingface.co/datasets/HuggingFaceH4/ultrafeedback_binarized}{UltraFeedback} dataset and self-generated completions.
The highest scores are highlighted in blue, 
and scores within one standard deviation of the highest are boldfaced.
}
\label{table: winrate model performance evaluation llama}
\end{table*}

We first compare MC-PO with existing offline PO algorithms,
then demonstrate that OnMC-PO further improves alignment performance.
We categorize existing baselines as contrastive and classification-based approaches based on their objective functions.
Specifically,
contrastive-based algorithms include 
\textbf{DPO} \citep{rafailov2024direct},
\textbf{RPO} \citep{liu2024provably},
\textbf{EXO} \citep{jitowards},
\textbf{SimPO} \citep{meng2024simpo} and
\textbf{CPO} \citep{xucontrastive}.
Classification-based algorithms include
\textbf{BCO} \citep{jung2024binary},
\textbf{KTO} \citep{ethayarajh2024kto},
\textbf{APO} \citep{d2024anchored},
\textbf{SPPO} \citep{wu2024self}
and \textbf{NCA} \citep{chen2024noise}.
Details on baseline algorithms can be found in Appendix \ref{appendix: baseline preference optimization algorithms}.

The main results are summarized in Table \ref{table: winrate model performance evaluation llama}. 
MC-PO outperforms existing baselines in five out of six studies. 
Notably, in the base setup, using the {\href{https://huggingface.co/HuggingFaceH4/mistral-7b-sft-beta}{Mistral-7B-SFT}} model, 
MC-PO outperforms DPO by 
$4.5\%$ and $9\%$ on Alpaca-Eval and Arena, respectively. 
Using the {\href{https://huggingface.co/allenai/Llama-3.1-Tulu-3-8B-SFT}{Llama-3.1-8B-SFT}} model, MC-PO leads to winrates of $35.84\%$ and $63.77\%$ on Alpaca-Eval and Arena, respectively. 
In the instruct setup, as all candidate completions are sampled from the {\href{https://huggingface.co/meta-llama/Llama-3.1-8B-Instruct}{Llama-3.1-8B-Instruct}} model, sampling from these candidates proves less effective due to the low diversity in the candidate set. 
Consequently, MC-PO shows less improvement compared to existing baselines.
When MC-PO is extended to online setting based on the batched online setup,
OnMC-PO results in further improvement.
With the {\href{https://huggingface.co/meta-llama/Llama-3.1-8B-Instruct}{Llama-3.1-8B-Instruct}} model,
OnMC-PO leads to a winrate of $72.63 \%$ on Alpaca, outperforming existing baselines.

\subsection{Analysis of Sampling Strategies in MC-PO}
\label{sec: experiments on sampling strategies}
\begin{figure}[th]
\centering
\begin{minipage}{.99\columnwidth}
\centering
    \includegraphics[width = \linewidth]{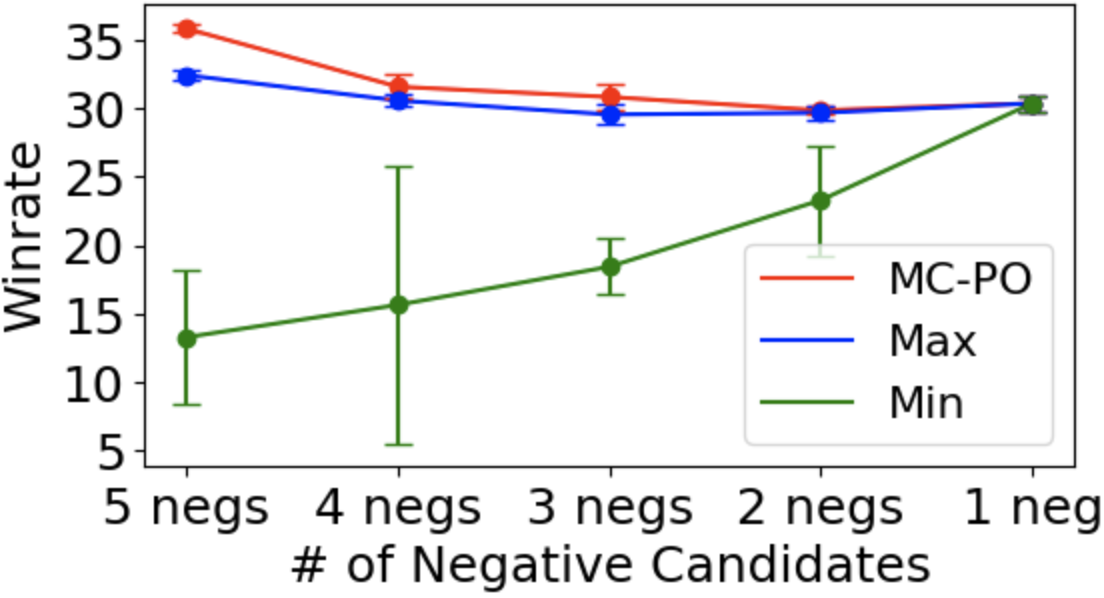}
    (a): Alpaca-Eval
\end{minipage}
\begin{minipage}{.99\columnwidth}
\centering
    \includegraphics[width = \linewidth]{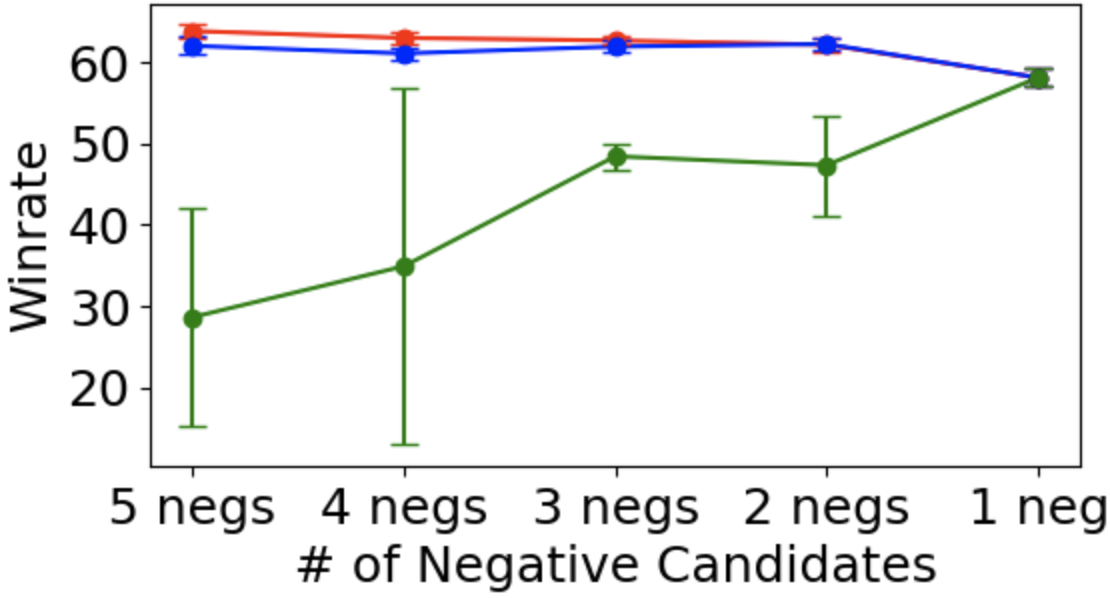}
    (b): Arena
\end{minipage}
\caption{
Winrate evaluation of the optimized \href{https://huggingface.co/allenai/Llama-3.1-Tulu-3-8B-SFT}{Llama-3.1-8B-SFT} model using MC-PO, versus its Max, and Min sampling based variants. 
Five modified \href{https://huggingface.co/datasets/berkeley-nest/Nectar}{Nectar} datasets are used for training. 
$x$ negs represents that the training dataset contains $x$ negative candidates for each input prompt. 
For example, the $3$ negs dataset is constructed by removing rank-$2$ and rank-$3$ completions from the \href{https://huggingface.co/datasets/berkeley-nest/Nectar}{Nectar} dataset.
}
\label{fig: ablation sampling strategies}
\end{figure}

We also study how varying the sampling strategies in MC-PO impact the PO performance as the quality of sampled preference dataset gets changed.
We develop Max and Min samplings as variants of MC-PO based on the MCMC kernel defined in Algorithm \ref{alg: contrastive divergence kernel}. Here, 
Max (Min) sampling variant outputs the candidate with maximum (minimum) weight among all candidates,
where the weight is calculated as 
$
w_i = \frac{\exp 
\big(
\beta
r_{\boldsymbol\theta}(\mat{x}, \mat{y}_i)
\big)} {\sum_{j=0}^L \exp 
\big(
\beta
r_{\boldsymbol\theta}(\mat{x}, \mat{y}_j)
\big)}
$.
Moreover,
we construct preference datasets with varying candidate qualities. For instance, based on the \href{https://huggingface.co/datasets/berkeley-nest/Nectar}{Nectar} dataset, we progressively remove highly ranked completions for each input prompt. 
The first dataset excludes the rank-$2$ completion, while the second dataset excludes both the rank-$2$ and rank-$3$ completions, resulting in diminished candidate quality.
The results are summarized in Fig.~\ref{fig: ablation sampling strategies},
from which we observe the following insights:

\textbf{(1) Sampling from the MCMC kernel yields optimal performance.}
MC-PO achieves a balance between exploitation (sampling according to the categorical distribution), and exploration  
(retaining probabilities for choosing alternative candidates). As detailed in Sec. \ref{sec: mcmc preference optimization}, this approach accurately estimates the gradient of the log-normalization constant, which in turn, leads to improved  performance.

\textbf{(2) Min-based variant leads to low performance and high variance.}
From the NLL estimation viewpoint of PO, dispreferred samples are used for estimating the gradient of the log-normalization constant. 
CD proves that hard negatives yield a more accurate gradient estimation. 
The Min sampling variant, instead, is deemed as the worst sampling strategy according to CD, 
leading to inaccurate gradient estimations and therefore resulting in lower model performance and increased variance.

\textbf{(3) MC-PO correlates with the quality of candidates proportionally.}
When the preference dataset includes five high-quality candidates for each input prompt (referred to as $5$ negs), both MC-PO and Max strategies yield the best model performance. 
However, as high-quality completions are eliminated from the candidate sets, the performance of models optimized with MC-PO and Max variant declines due to the reduced quality of candidates. 
When there is only one candidate per prompt (referred to as $1$ neg), all three sampling strategies are equivalent.

\subsection{Data and Ablation Analysis of MC-PO}
\label{sec:expdata}

\textbf{MC-PO is robust against noisy samples: }
We demonstrate that MC-PO maintains high model performance even when noisy samples are included in the candidate set. 
Differently, it has been proven that when the edit distance between pairs of completions is low, DPO leads to a reduction in the model’s likelihood of preferring the desired completions \citep{pal2024smaug}. 

For experimental setup, we consider the processed \href{https://huggingface.co/datasets/berkeley-nest/Nectar}{Nectar} dataset and inject a noise sample into the candidate set by randomly switching two tokens of the preferred completion for each input prompt. 
As shown in Table \ref{table: model performance when noise samples are included},
due to the small edit distance between all preference pairs, DPO($-$), which applies the noise sample as dispreferred completion, leads to a degenerated model. 
DPO, which randomly selects a dispferred completion, is impacted by the noise injection.
MC-PO, however,  samples a dispreferred completion based on the log-probabilities of all candidates, chooses semantically hard negatives instead of syntactically similar negatives with small edit distances.

\paragraph{MC-PO benefits from sampling more negatives.}

\begin{table}[th]
\centering
\begin{tabular}{l|cc}
\hline
Model & \multicolumn{2}{|c}{\href{https://huggingface.co/allenai/Llama-3.1-Tulu-3-8B-SFT}{Llama-3.1-8B}} \\
Evaluation & Alpaca & Arena \\ \hline
DPO($-$) & 1.08($\pm$0.6) & 3.17($\pm$0.9)
\\
DPO & 23.62($\pm$2.81) & 50.51($\pm$5.59)
\\
\band MC-PO & \textcolor{blue}{28.98($\pm$1.34)} & \textcolor{blue}{58.09($\pm$2.63)} \\ \hline
\end{tabular}
\caption{
Evaluation of models trained with DPO and MC-PO when noise samples are included in the dispreferred candidate set. 
DPO($-$) uses the noise sample as a dispreferred completion, 
DPO selects a dispreferred completion at random, 
and MC-PO samples from a candidate set that includes the noise sample.
}
\label{table: model performance when noise samples are included}
\end{table}
\begin{table}[th]
\centering
\begin{tabular}{l|ccc}
\hline
\multicolumn{4}{c}{\href{https://huggingface.co/datasets/berkeley-nest/Nectar}{Nectar} ~/~ 
\href{https://huggingface.co/allenai/Llama-3.1-Tulu-3-8B-SFT}{Llama-3.1-8B-SFT}
}
\\ \hline
Alpaca & $M=1$ & $M=2$ & $M=3$ \\
RNCE & 33.74(2.51) & 33.73(0.49) & 34.36(0.56)
\\
\band MC-PO & \textcolor{blue}{35.84(0.31)} & \textcolor{blue}{36.73(0.59)} & \textcolor{blue}{37.40(0.13)} \\ 
\hline
Arena & $M=1$ & $M=2$ & $M=3$ \\
RNCE & 60.25(2.12) & 61.53(0.29) & 61.16(0.69)
\\
\band MC-PO & \textcolor{blue}{63.77(0.81)} & \textcolor{blue}{64.53(0.60)} & \textcolor{blue}{66.16(0.13)} \\ 
\hline
\end{tabular}
\caption{
Performance comparison of preference-optimized models using RNCE and MC-PO with multiple dispreferred samples.
}
\label{table: model performance sampling strategy more dispreferred completions}
\end{table}
In Table \ref{table: model performance sampling strategy more dispreferred completions},
we examine the performance of MC-PO and RNCE when the number of dispreferred samples is greater than $1$. 
It is evident that RNCE (who uses random sampling) does not achieve notable improvements when having more  dispreferred samples. Conversely, MC-PO (who utilizes an MCMC kernel for sampling) consistently demonstrates improved performance when the number of dispreferred samples increases.

\textbf{MC-PO versus augmented training dataset.}
MC-PO optimizes models on a specialized data format where each input prompt is paired with a preferred completion alongside multiple dispreferred completions. 
Utilizing this data format, an alternative method can be augmenting the training dataset by pairing each preferred completion with each of its corresponding dispreferred completions. 
For instance, in the processed \href{https://huggingface.co/datasets/berkeley-nest/Nectar}{Nectar} dataset, where each prompt contains $5$ candidate completions, we can increase the dataset size four-fold by this augmentation approach. 
Subsequently, we implement DPO on the augmented dataset, as an alternative to compare with MC-PO. 
Recall in Table \ref{table: winrate model performance evaluation llama} that the {\href{https://huggingface.co/allenai/Llama-3.1-Tulu-3-8B-SFT}{Llama-3.1-8B-SFT}} model trained with MC-PO achieves winrates of $35.84(\pm0.31)$ and $63.77(\pm0.81)$ on Alpaca-Eval and Arena, respectively. 
This alternative solution, which increases training time by $4$X, leads to winrates of $34.18(\pm1.26)$ and $59.62(\pm1.04)$ on Alpaca-Eval and Arena, respectively.

\textbf{OnMC-PO versus Online DPO.}
We compare OnMC-PO with online DPO \citep{guo2024direct} that applies random sampling to choose a disprerred completion.
Both algorithms generate completion in an batched online setting.
With the \href{https://huggingface.co/meta-llama/Llama-3.1-8B-Instruct}{Llama-3.1-8B-Instruct} model.
Online DPO achieves winrates of $72.63 \%$ and $73.28 \%$ on Alpaca-eval and Arena, respectively,
On Arena,
OnMC-PO outperforms online DPO by a large margin, achieve a winrate of $77.71 \%$.

\vspace{-2mm}
\section{Conclusion, Limitations and Future Work.}
\vspace{-2mm}
This paper formulates the alignment problem via NLL estimation and propose sampling-based solutions for better PO. Compared to DPO, our CD based MC-PO adds approximately $30 \%$ training time because of the computations required by the MCMC kernel. Also we run MC-PO's MCMC kernel for a single step and this is typically sub-optimal because executing the MCMC chain for multiple steps is essential to acquire high-quality samples \citep{hinton2002training}, which will further adds more  computational overhead to PO training. As future research, we aim to showcase the benefits of utilizing a multi-step MCMC based PO solutions and will develop more efficient training algorithms to speed up these algorithms.

\newpage
\bibliography{00-main}
\bibliographystyle{icml2025}

\newpage
\appendix
\onecolumn

\section{Theoretical Derivations}
\label{sec: theoretical derivations}

\subsection{Background on Preference Optimization}
\label{sec: background on rlhf}
\paragraph{RLHF formulation.}
Language models have shown impressive capabilities in the past few years by generating diverse and compelling text from human input prompts.
One of the key ingredients behind the success of language models is post-training alignment.
Reinforcement learning from human feedback (RLHF) aims to align a target policy $\pi_{\boldsymbol\theta}$ with human preference based on an reward model $r(\mat{x}, \mat{y})$ that approximates human judgements.
This optimization problem is formulated as 
\begin{equation*}
\max \limits_{\pi_{\boldsymbol\theta}}
\mathbb{E}_{{\mat{x} \sim p,\; \mat{y} \sim \pi_\theta(\cdot|\mat{x})}}
[r(\mat{x}, \mat{y})]
-
\beta \cdot KL[\pi_{\boldsymbol\theta}(\mat{y} | \mat{x}) || \pi_{\rm ref}(\mat{y} | \mat{x})],
\end{equation*}
where $\beta$ is a hyper-parameter controlling the deviation from the base reference policy $\pi_{\rm ref}$.
The added constraint is important,
as it prevents the model from deviating too far from the distribution on which the reward is accurate,
as well as maintaining the generation diversity and preventing mode-collapse to single high-reward answers.

Prior works \citep{go2023aligning, peters2007reinforcement} prove that the optimal solution to the KL-constrained reward maximization objective takes the following form,
\begin{equation*}
\pi^{\ast}(\mat{y} | \mat{x})
=
\frac{1} {Z(x)}
\pi_{\rm ref}(y | x) 
\exp 
\Big(
\frac{1} {\beta} r(x, y)
\Big),
\end{equation*}
where $Z(x)$ is the partition function that normalizes the probability.
\begin{equation*}
Z(x)
=
\sum_{\mat{y}}
\pi_{\rm ref}(y | x) 
\exp 
\Big(
\frac{1} {\beta} r(x, y)
\Big).
\end{equation*}
Since the space of model completions is combinatorilly large, computing $Z(x)$ exactly is often infeasible.

\paragraph{Direct preference optimization \citep{rafailov2024direct}.}
Direct preference optimization (DPO) enables to directly optimize a language model to adhere to human preferences,
without explicit reward modeling or reinforcement learning.
Specifically,
DPO reparameterizes the reward model in terms of the target policy,
which enables directly optimizing the target policy from preference dataset.
To begin with,
we can rearrange the close-form solution of RLHF to express the reward function in terms of its corresponding optimal policy, the reference policy,
and the partition function,
\begin{equation*}
r(\mat{x}, \mat{y})
=
\beta 
\log
\frac{\pi^{\ast}(\mat{y} | \mat{x})}
{\pi_{\rm ref}(\mat{y} | \mat{x})}
+
\beta
\log Z(\mat{x}).
\end{equation*}
Substituting this into the Bradley-Terry preference model \citep{bradley1952rank} yields
\begin{equation*}
    p^{*}(\mat{y}_0 \succ \mat{y}_1 | \mat{x})
    = \frac{1}{1 + \exp\left( 
        \beta \log \frac{\pi^{*}(\mat{y}_1 | \mat{x})}{\pi_{\rm ref}(\mat{y}_1 | \mat{x})}
        - \beta \log \frac{\pi^{*}(\mat{y}_0 | \mat{x})}{\pi_{\rm ref}(\mat{y}_0 | \mat{x})}
    \right)},
\end{equation*}
where $\mat{y}_0$ and $\mat{y}_1$ denote preferred and dispreferred completions, respectively.

DPO reformulates this as a maximum likelihood objective over a preference dataset $\mathcal{D} = \{(\mat{x}^{(i)}, \mat{y}_0^{(i)}, \mat{y}_1^{(i)})\}$, leading to:
\begin{equation*}
    \pi_{\boldsymbol{\theta}}^{*}
    = \arg\min_{\pi_{\boldsymbol{\theta}}} 
    \mathbb{E}_{(\mat{x},\mat{y}_0,\mat{y}_1) \sim \mathcal{D}}
    \left[ 
        -\log \sigma\left( 
            \beta \log \frac{\pi_{\boldsymbol{\theta}}(\mat{y}_0 | \mat{x})}{\pi_{\rm ref}(\mat{y}_0 | \mat{x})} 
            - \beta \log \frac{\pi_{\boldsymbol{\theta}}(\mat{y}_1 | \mat{x})}{\pi_{\rm ref}(\mat{y}_1 | \mat{x})}
        \right)
    \right],
\end{equation*}
where $\sigma$ denotes the logistic function. 
This approach enables direct optimization of language models using preference data while avoiding instabilities associated with RL-based training.

\subsection{Proof of Proposition \ref{prop: ranking noise contrastive estimation objective}}
\label{sec: RNCE objective function derivation}
\rnce*

\begin{proof}
The Ranking Noise Contrastive Estimation (RNCE) is based on a multi-class classification problem with a single observation point and multiple noise ones from a proposal distribution.

Suppose we have $\mat{y}_0 \sim \pi^{\ast}(\mat{y} | \mat{x})$,
and noise samples $\mat{y}_i \sim \mu(\mat{y} | \mat{x})$, for $i = 1,2,...,M$.
We use $\{ \mat{y}_i \}_{i=0}^M$ to denote all samples.
Let the variable $z \in \{0, 1, ..., M \}$ denote the index of the observation sampled from $\pi^{\ast}$,
and assume that all outcomes are equally probable a priori,
e.g.,
$P(z=i) = 1 / (M + 1)$, for $i = 0, 1,...,M$.
Conditioned on $\{ \mat{y}_i \}_{i=0}^M$, we want the probability model to maximize the posterior probability of identifying $z = 0$ (the probability of identifying the index corresponding to the observation sampled from $\pi^{\ast}$).

\begin{equation*}
\min \limits_{\boldsymbol\theta}
- \log P(z = 0 | \mat{x}, \{ \mat{y}_i \}_{i=0}^M).
\end{equation*}

According to the Bayes rule and the law of total probability,
\begin{equation*}
P(z = 0 | \mat{x}, \{ \mat{y}_i \}_{i=0}^M)
=
\frac{P(\{ \mat{y}_i \}_{i=0}^M | \mat{x}, z = 0) P(z = 0)}
{P(\{ \mat{y}_i \}_{i=0}^M | \mat{x})}
=
\frac{P(\{ \mat{y}_i \}_{i=0}^M | \mat{x}, z = 0) P(z = 0)}
{\sum_{j=0}^M P(\{ \mat{y}_i \}_{i=0}^M | \mat{x}, z = j) P(z = j)}.
\end{equation*}

Recall that all outcomes of $z$ are equally probable,
$P(z=0) = P(z=1) = ... = P(z=M)$,
\begin{equation*}
P(z = 0 | \mat{x}, \{ \mat{y}_i \}_{i=0}^M)
=
\frac{P(\{ \mat{y}_i \}_{i=0}^M | \mat{x}, z = 0)}
{\sum_{j=0}^M P(\{ \mat{y}_i \}_{i=0}^M | \mat{x}, z = j)}.
\end{equation*}

Since $z$ is the index corresponding to the observation sampled from $\pi^{\ast}$,
\begin{equation*}
P(\{ \mat{y}_i \}_{i=0}^M | \mat{x}, z = i)
=
\pi^{\ast}(\mat{y}_i | \mat{x})
\prod_{j \neq i}
\mu(\mat{y}_j | \mat{x}).
\end{equation*}

Therefore,
\begin{align*}
P(z = 0 | \mat{x}, \{ \mat{y}_i \}_{i=0}^M)
& =\frac{P(\{ \mat{y}_i \}_{i=0}^M | \mat{x}, z = 0)}
{\sum_{j=0}^M P(\{ \mat{y}_i \}_{i=0}^M | \mat{x}, z = j)},
\\
& =
\frac{\pi^{\ast}(\mat{y}_0 | \mat{x}) \prod_{i=1}^M
\mu(\mat{y}_i | \mat{x})}
{\sum_{i=0}^M \pi^{\ast}(\mat{y}_i | \mat{x}) \prod_{j \neq i} \mu(\mat{y}_j | \mat{x})},
\\
& =
\frac{\pi^{\ast}(\mat{y}_0 | \mat{x}) / \mu(\mat{y}_0 | \mat{x}) \prod_{i=0}^M \mu(\mat{y}_i | \mat{x})}
{\sum_{i=0}^M \pi^{\ast}(\mat{y}_i | \mat{x}) / \mu(\mat{y}_i | \mat{x}) \prod_{j=0}^M \mu(\mat{y}_j | \mat{x})},
\\
& =
\frac{\pi^{\ast}(\mat{y}_0 | \mat{x}) / \mu(\mat{y}_0 | \mat{x})}
{\sum_{i=0}^M \pi^{\ast}(\mat{y}_i | \mat{x}) / \mu(\mat{y}_i | \mat{x})}.
\end{align*}

We use the probability model defined in Eq.~\eqref{eq: mle parameterized policy} to estimate $\pi^{\ast}$,
\begin{equation*}
\pi^{\ast}(\mat{y} | \mat{x})
\approx
p_{\boldsymbol\theta}(\mat{y} | \mat{x})
: =
\frac{1} {Z_{\boldsymbol\theta}(\mat{x})}
\mu(\mat{y} | \mat{x}) 
\exp\Big(
\beta
r_{\boldsymbol\theta}(\mat{x}, \mat{y})
\Big),
\end{equation*}
this leads to the following
\begin{equation*}
P(z = 0 | \mat{x}, \{ \mat{y}_i \}_{i=0}^M)
=
\frac{p_{\boldsymbol\theta}(\mat{y}_0 | \mat{x}) / \mu(\mat{y}_0 | \mat{x})}
{\sum_{i=0}^M p_{\boldsymbol\theta}(\mat{y}_i | \mat{x}) / \mu(\mat{y}_i | \mat{x})}
= 
\frac{\exp\Big(
\beta
r_{\boldsymbol\theta}(\mat{x}, \mat{y}_0)
\Big)}
{\sum_{i=0}^M \exp\Big(
\beta
r_{\boldsymbol\theta}(\mat{x}, \mat{y}_i)
\Big)}.
\end{equation*}

Finally,
\begin{align*}
\min \limits_{\boldsymbol\theta}
- \log P(z = 0 | \mat{x}, \{ \mat{y}_i \}_{i=0}^M)
& =
\min \limits_{\boldsymbol\theta}
- \log
\Big(
\frac{\exp\Big(
\beta
r_{\boldsymbol\theta}(\mat{x}, \mat{y}_0)
\Big)}
{\sum_{i=0}^M \exp\Big(
\beta
r_{\boldsymbol\theta}(\mat{x}, \mat{y}_i)
\Big)}
\Big)
\\
& =
\min \limits_{\boldsymbol\theta}
-
\Big(
\beta
r_{\boldsymbol\theta}(\mat{x}, \mat{y}_0)
\Big)
+
\log
{\sum_{i=0}^M \exp
\Big(
\beta
r_{\boldsymbol\theta}(\mat{x}, \mat{y}_i)
\Big)}.
\end{align*}
\end{proof}

\subsection{Proof of Lemma \ref{prop: cd generalizes rnce}}
\label{sec: cd generalizes rnce}
\cdgeneralizernce*

\begin{proof}
Let $\mat{y}_0 \sim \pi^{\ast}(\mat{y} | \mat{x})$
and
$\mat{y}_i \sim \mu(\mat{y} | \mat{x})$ for $i \in [1,...,M]$.
Recall the intractable gradient term in Eq.~\eqref{eq: gradient of mle objective function},
\begin{align}
\label{eq: cd estimation gradient of log norm constant}
\mathbb{E}_{p_{\boldsymbol\theta}(\mat{y} | \mat{x})}
\Big[
\nabla_{\boldsymbol\theta} 
\beta
r_{\boldsymbol\theta}(\mat{x}, \mat{y})
\Big]
& =
\mathbb{E}_{\mu(\mat{y} | \mat{x})} 
\Big[
\frac{p_{\boldsymbol\theta}(\mat{y} | \mat{x})} {\mu(\mat{y} | \mat{x})} 
\nabla_{\boldsymbol\theta} 
\beta
r_{\boldsymbol\theta}(\mat{x}, \mat{y})
\Big],
\\ \nonumber
& =
\mathbb{E}_{\mu(\mat{y} | \mat{x})} 
\Big[
\frac{
\exp 
\big(
\beta
r_{\boldsymbol\theta}(\mat{x}, \mat{y})
\big)
}
{Z_{\boldsymbol\theta}(\mat{x})}
\nabla_{\boldsymbol\theta} 
\beta
r_{\boldsymbol\theta}(\mat{x}, \mat{y})
\Big],
\\ \nonumber
& \approx
\sum_{i=0}^M
\frac{\exp 
\big(
\beta
r_{\boldsymbol\theta}(\mat{x}, \mat{y}_i)
\big)}
{\sum_{j=0}^M \exp 
\big(
\beta
r_{\boldsymbol\theta}(\mat{x}, \mat{y}_j)
\big)}
\nabla_{\boldsymbol\theta}
\beta
r_{\boldsymbol\theta}(\mat{x}, \mat{y}_i)
,
\\ \nonumber
& =
\nabla_{\boldsymbol\theta}
\log
{\sum_{i=0}^M \exp
\Big(
\beta
r_{\boldsymbol\theta}(\mat{x}, \mat{y}_i)
\Big)}.
\end{align}
Threfore,
CD applies importance sampling with a proposal distribution $\mu$ to estimate the gradient of the log-normalization constant.
\end{proof}

\subsection{Proof of Lemma \ref{prop: dpo gradient}}
\label{sec: dpo gradient derivation}
\dpogradient*
\begin{proof}
Recall the sampling-based solution of the NLL estimation objective function in Eq.~\eqref{eq: ranking noise contrastive estimation},
let $M=1$,
\begin{align*}
\nabla_{\boldsymbol\theta} \mathcal{L}^{Sample}(\boldsymbol\theta, \mat{x}, \mat{y}_0)
& =
\nabla_{\boldsymbol\theta}
\Big[
-
\beta
r_{\boldsymbol\theta}(\mat{x}, \mat{y}_0)
+
\log
{\sum_{i=0}^1 \exp
\Big(
\beta
r_{\boldsymbol\theta}(\mat{x}, \mat{y}_i)
\Big)}
\Big],
\\ \nonumber
& =
\nabla_{\boldsymbol\theta}
\Big[
-
\log
\frac{
\exp
\Big(
\beta
r_{\boldsymbol\theta}(\mat{x}, \mat{y}_0)
\Big)
}
{
\exp
\Big(
\beta
r_{\boldsymbol\theta}(\mat{x}, \mat{y}_0)
\Big)
+
\exp
\Big(
\beta
r_{\boldsymbol\theta}(\mat{x}, \mat{y}_1)
\Big)
}
\Big],
\\ \nonumber
& =
\nabla_{\boldsymbol\theta}
\Big[
-
\log \sigma
\Big(
\beta
r_{\boldsymbol\theta}(\mat{x}, \mat{y}_0)
-
\beta
r_{\boldsymbol\theta}(\mat{x}, \mat{y}_1)
\Big)
\Big],
\\ \nonumber
& =
- \Big(1 - \sigma
\Big(
\beta
r_{\boldsymbol\theta}(\mat{x}, \mat{y}_0)
-
\beta
r_{\boldsymbol\theta}(\mat{x}, \mat{y}_1)
\Big)
\Big)
\nabla_{\boldsymbol\theta}
\Big(
\beta
r_{\boldsymbol\theta}(\mat{x}, \mat{y}_0)
-
\beta
r_{\boldsymbol\theta}(\mat{x}, \mat{y}_1)
\Big),
\\ \nonumber
& =
- \beta \sigma
\Big(
\beta
r_{\boldsymbol\theta}(\mat{x}, \mat{y}_1)
-
\beta
r_{\boldsymbol\theta}(\mat{x}, \mat{y}_0)
\Big)
\nabla_{\boldsymbol\theta}
\Big(
r_{\boldsymbol\theta}(\mat{x}, \mat{y}_0)
-
r_{\boldsymbol\theta}(\mat{x}, \mat{y}_1)
\Big).
\end{align*}
The last equality uses $1 - \sigma(x) = \sigma(-x)$.
\end{proof}

\subsection{Proof of Proposition \ref{prop: unbiased mle estimator}}
\label{sec: unbiased mle estimator}

\begin{lemma}
\label{lemma: appendix probability simplification}
Let $z$ denote the index corresponding to the observation sampled from the probability model $p_{\boldsymbol\theta}$,
\begin{equation*}
\begin{cases} 
    \mat{y}_z \sim p_{\boldsymbol\theta}(\mat{y} | \mat{x}), \\
    \mat{y}_j \sim \mu(\mat{y} | \mat{x}),
    \;\; {\rm for} \; j \neq z, j = 0,1,...,M.
\end{cases}
\end{equation*}
Then the marginal distribution
\begin{equation*}
P(\{\mat{y}_i\}_{i=0}^M)
=
\frac{\hat{Z}_{\boldsymbol\theta}(\mat{x})} {Z_{\boldsymbol\theta}(\mat{x})}
\prod_{j=0}^M \mu(\mat{y}_j | \mat{x}).
\end{equation*}
\end{lemma}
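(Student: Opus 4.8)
The setup is: one index $z$ is chosen (implicitly uniformly over $\{0,\dots,M\}$), and conditioned on $z$, the sample $\mat{y}_z$ is drawn from the probability model $p_{\boldsymbol\theta}$ while every other sample $\mat{y}_j$ for $j\neq z$ is drawn i.i.d.\ from the proposal $\mu$. The lemma computes the marginal density of the unordered collection $\{\mat{y}_i\}_{i=0}^M$ after integrating out (summing over) the latent index $z$. The target identity asserts this marginal equals $\frac{\hat Z_{\boldsymbol\theta}(\mat{x})}{Z_{\boldsymbol\theta}(\mat{x})}\prod_{j=0}^M \mu(\mat{y}_j\mid\mat{x})$, where $\hat Z_{\boldsymbol\theta}(\mat{x})=\frac{1}{M+1}\sum_{i=0}^M\exp(\beta r_{\boldsymbol\theta}(\mat{x},\mat{y}_i))$.

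**Plan.** The approach is a direct marginalization over the latent index $z$. First I would write $P(\{\mat{y}_i\}_{i=0}^M)=\sum_{k=0}^M P(z=k)\,P(\{\mat{y}_i\}_{i=0}^M\mid z=k)$, using the prior $P(z=k)=\tfrac{1}{M+1}$. Conditioned on $z=k$, the construction makes $\mat{y}_k$ drawn from $p_{\boldsymbol\theta}$ and the remaining $M$ samples drawn from $\mu$, so the conditional joint density factors as $P(\{\mat{y}_i\}_{i=0}^M\mid z=k)=p_{\boldsymbol\theta}(\mat{y}_k\mid\mat{x})\prod_{j\neq k}\mu(\mat{y}_j\mid\mat{x})$. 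This is structurally the same factorization already used in the proof of Proposition~\ref{prop: ranking noise contrastive estimation objective}, just with the roles of $\pi^\ast$ and $p_{\boldsymbol\theta}$ swapped, so it should go through identically.

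**Key algebraic step.** The crux is to pull out the common product $\prod_{j=0}^M\mu(\mat{y}_j\mid\mat{x})$. For each term in the sum I would multiply and divide by $\mu(\mat{y}_k\mid\mat{x})$, giving
\[
p_{\boldsymbol\theta}(\mat{y}_k\mid\mat{x})\prod_{j\neq k}\mu(\mat{y}_j\mid\mat{x})
=\frac{p_{\boldsymbol\theta}(\mat{y}_k\mid\mat{x})}{\mu(\mat{y}_k\mid\mat{x})}\prod_{j=0}^M\mu(\mat{y}_j\mid\mat{x}).
\]
Then I would substitute the definition of $p_{\boldsymbol\theta}$ from Eq.~\eqref{eq: mle parameterized policy}, so that $\frac{p_{\boldsymbol\theta}(\mat{y}_k\mid\mat{x})}{\mu(\mat{y}_k\mid\mat{x})}=\frac{1}{Z_{\boldsymbol\theta}(\mat{x})}\exp(\beta r_{\boldsymbol\theta}(\mat{x},\mat{y}_k))$, exactly as in the RNCE derivation. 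Summing over $k$ with the $\tfrac{1}{M+1}$ prior collects $\frac{1}{(M+1)Z_{\boldsymbol\theta}(\mat{x})}\sum_{k=0}^M\exp(\beta r_{\boldsymbol\theta}(\mat{x},\mat{y}_k))=\frac{\hat Z_{\boldsymbol\theta}(\mat{x})}{Z_{\boldsymbol\theta}(\mat{x})}$, which is the claimed prefactor, leaving the product $\prod_{j=0}^M\mu(\mat{y}_j\mid\mat{x})$ untouched.

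**Expected obstacle.** The calculation itself is routine; the only genuine subtlety is the exchangeability/indexing bookkeeping. Because the construction treats $z$ as a hidden index and the observed object is the tuple $\{\mat{y}_i\}_{i=0}^M$, I need to be careful that the $\tfrac{1}{M+1}$ prior together with the symmetric summation correctly accounts for which sample came from $p_{\boldsymbol\theta}$, and that the final marginal is genuinely the prefactor times the i.i.d.\ proposal product rather than an expression that still depends on an unmarginalized index. I would verify this by confirming that each summand shares the identical product $\prod_{j=0}^M\mu(\mat{y}_j\mid\mat{x})$ so it factors cleanly out of the sum, and that nothing in $\hat Z_{\boldsymbol\theta}$ implicitly assumes a fixed $z$. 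Once that is confirmed the identity follows immediately.
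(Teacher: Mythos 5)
Your proof is correct and is essentially identical to the paper's: both marginalize over a uniform latent index $z$, write the conditional joint as $p_{\boldsymbol\theta}(\mat{y}_k\mid\mat{x})\prod_{j\neq k}\mu(\mat{y}_j\mid\mat{x})$, multiply and divide by $\mu(\mat{y}_k\mid\mat{x})$ to factor out the common product $\prod_{j=0}^M\mu(\mat{y}_j\mid\mat{x})$, and then recognize $\frac{1}{(M+1)Z_{\boldsymbol\theta}(\mat{x})}\sum_k \exp(\beta r_{\boldsymbol\theta}(\mat{x},\mat{y}_k)) = \hat{Z}_{\boldsymbol\theta}(\mat{x})/Z_{\boldsymbol\theta}(\mat{x})$. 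The only cosmetic difference is that the paper fixes $z=0$ ``without loss of generality'' before summing, whereas you keep the index $k$ general throughout; the bookkeeping concern you flag is handled the same way in both arguments.
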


\begin{proof}
Without loss of generality, we consider $z=0$.
Let 
\begin{equation*}
P(z=0, \{\mat{y}_i\}_{i=0}^M) = p_{\boldsymbol\theta}(\mat{y}_0 | \mat{x}) 
\prod_{i=1}^M \mu(\mat{y}_i | \mat{x}).
\end{equation*}
Then,
assume that all outcomes of $z$ are equally probable a priori,
e.g.,
$P(z=i) = 1 / (M + 1)$, for $i = 0, 1,...,M$,
\begin{equation*}
P(z=0, \{\mat{y}_i\}_{i=0}^M)
=
P(z = 0) 
\cdot
P(\mat{y}_0 | \mat{x}, z = 0)
\cdot
\prod_{i=1}^M P(\mat{y}_i | \mat{x}, z = 0)
=
\frac{1}{M + 1}
\cdot
p_{\boldsymbol\theta}(\mat{y}_0 | \mat{x})
\cdot
\prod_{i=1}^M \mu(\mat{y}_i | \mat{x}).
\end{equation*}

The marginal distribution $P(\{\mat{y}_i\}_{i=0}^M)$ can be computed by marginalizing $P(z, \{\mat{y}_i\}_{i=0}^M)$ over the index $z$.
\begin{align*}
P(\{\mat{y}_i\}_{i=0}^M)
& =
\sum_{i=0}^M
P(z=i, \{\mat{y}_i\}_{i=0}^M)
\\ \nonumber
& =
\sum_{i=0}^M
\frac{1}{M + 1}
\cdot
p_{\boldsymbol\theta}(\mat{y}_i | \mat{x})
\cdot
\prod_{j \neq i} \mu(\mat{y}_j | \mat{x}),
\\ \nonumber
& =
\sum_{i=0}^M
\frac{1}{M + 1}
\cdot
\frac{\mu(\mat{y}_i | \mat{x}) 
\exp
\big(\beta
r_{\boldsymbol\theta}(\mat{x}, \mat{y}_i)
\big) } {Z_{\boldsymbol\theta}(\mat{x})}
\frac{\mu(\mat{y}_i | \mat{x})} {\mu(\mat{y}_i | \mat{x})}
\cdot
\prod_{j \neq i} \mu(\mat{y}_j | \mat{x}),
\\ \nonumber
& =
\frac{1} {Z_{\boldsymbol\theta}(\mat{x})(M + 1)}
\cdot
\Big(
\prod_{j=0}^M \mu(\mat{y}_j | \mat{x})
\Big)
\Big(
\sum_{i=0}^M
\exp\big(
\beta
r_{\boldsymbol\theta}(\mat{x}, \mat{y}_i)
\big)
\Big),
\\ \nonumber
& =
\frac{\hat{Z}_{\boldsymbol\theta}(\mat{x})} {Z_{\boldsymbol\theta}(\mat{x})}
\prod_{j=0}^M \mu(\mat{y}_j | \mat{x}),
\end{align*}

where $\hat{Z}_{\boldsymbol\theta}(\mat{x}) = \frac{1} {M + 1} \sum_{i=0}^M \exp
\big(
\beta
r_{\boldsymbol\theta}(\mat{x}, \mat{y}_i)
\big)
$.
\end{proof}

\UnbiasedMLE*

\begin{proof}

We prove that $\nabla_{\boldsymbol\theta}\log\hat{Z}_{\boldsymbol\theta}(\mat{x})$ is an unbiased estimator when $\mat{y}_0$ is sampled from the probability model $p_{\boldsymbol\theta}$.
\begin{align*}
\mathbb{E}_{p_{\boldsymbol\theta}(\mat{y}_0 | \mat{x}) 
\mu(\{\mat{y}_i\}_{i=1}^M | \mat{x})
}
\Big[
\nabla_{\boldsymbol\theta} \log \hat{Z}_{\boldsymbol\theta}(\mat{x})
\Big]
& =
\mathbb{E}_{p_{\boldsymbol\theta}(\mat{y}_0 | \mat{x}) 
\mu(\{\mat{y}_i\}_{i=1}^M | \mat{x})
}
\Big[
\frac{1} {\hat{Z}(\mat{x})}
\nabla_{\boldsymbol\theta} \hat{Z}_{\boldsymbol\theta}(\mat{x})
\Big],
\\ \nonumber
& =
\frac{1} {M + 1}
\sum_{i=0}^M
\mathbb{E}_{p_{\boldsymbol\theta}(\mat{y}_0 | \mat{x}) 
\mu(\{\mat{y}_i\}_{i=1}^M | \mat{x})
}
\Big[
\frac{\nabla_{\boldsymbol\theta}\exp\big(
\beta
r_{\boldsymbol\theta}(\mat{x}, \mat{y}_i)
\big)
}
{\hat{Z}_{\boldsymbol\theta}(\mat{x})}
\Big],
\\ \nonumber
& =
\frac{1} {M + 1}
\sum_{i=0}^M
\mathbb{E}_{
P(z, \{\mat{y}_i \}_{i=0}^M)
}
\Big[
\frac{\nabla_{\boldsymbol\theta}\exp
\big( 
\beta
r_{\boldsymbol\theta}(\mat{x}, \mat{y}_i)
\big)
} 
{\hat{Z}_{\boldsymbol\theta}(\mat{x})}
\Big],
\end{align*}
where 
$p_{\boldsymbol\theta}(\mat{y}_0 | \mat{x}) 
\mu(\{\mat{y}_i\}_{i=1}^M | \mat{x})
=
P(z, \{\mat{y}_i \}_{i=0}^M)
$ by definition.

Since the integrand 
$
\frac{\nabla_{\boldsymbol\theta}\exp\big(
\beta
r_{\boldsymbol\theta}(\mat{x}, \mat{y}_i)
\big)
} 
{\hat{Z}_{\boldsymbol\theta}(\mat{x})}
$ does not depend on the index $z$,
\begin{align}
\label{eq: appendix derivation unbiased gradient 2}
\mathbb{E}_{p_{\boldsymbol\theta}(\mat{y}_0 | \mat{x}) 
\mu(\{\mat{y}_i\}_{i=1}^M | \mat{x})
}
\Big[
\nabla_{\boldsymbol\theta} \log \hat{Z}_{\boldsymbol\theta}(\mat{x})
\Big]
=
\frac{1} {M + 1}
\sum_{i=0}^M
\mathbb{E}_{
P(\{\mat{y}_i \}_{i=0}^M)
}
\Big[
\frac{\nabla_{\boldsymbol\theta}\exp\big(
\beta
r_{\boldsymbol\theta}(\mat{x}, \mat{y}_i)
\big)
} 
{\hat{Z}_{\boldsymbol\theta}(\mat{x})}
\Big].
\end{align}

Recall that 
$P(\{\mat{y}_i\}_{i=0}^M)
=
\frac{\hat{Z}_{\boldsymbol\theta}(\mat{x})} {Z_{\boldsymbol\theta}(\mat{x})}
\prod_{j=0}^M \mu(\mat{y}_j | \mat{x})$ from Lemma \ref{lemma: appendix probability simplification},

\begin{align*}
\mathbb{E}_{P(\{\mat{y}_i \}_{i=0}^M)
}
\Big[
\frac{\nabla_{\boldsymbol\theta}\exp\big( 
\beta
r_{\boldsymbol\theta}(\mat{x}, \mat{y}_i)
\big)} 
{\hat{Z}_{\boldsymbol\theta}(\mat{x})
}
\Big]
& =
\int
\frac{\nabla_{\boldsymbol\theta}\exp\big(
\beta
\log 
r_{\boldsymbol\theta}(\mat{x}, \mat{y}_i)
\big)}
{\hat{Z}_{\boldsymbol\theta}(\mat{x})}
\cdot
\frac{\hat{Z}_{\boldsymbol\theta}(\mat{x})} {Z_{\boldsymbol\theta}(\mat{x})}
\cdot
\prod_{i=0}^M \mu(\mat{y}_i | \mat{x})
\cdot
d \{ \mat{y}_i \}_{i=0}^M,
\\
& =
\frac{1}{Z_{\boldsymbol\theta}(\mat{x})}
\int
\nabla_{\boldsymbol\theta}\exp\big( 
\beta
r_{\boldsymbol\theta}(\mat{x}, \mat{y}_i)
\big)
\cdot
\mu(\mat{y}_i | \mat{x})
\cdot
d\mat{y}_i
\cdot
\prod_{j \neq i}
\mu(\mat{y}_i | \mat{x}) 
\cdot
d\{\mat{y}_j\}_{j\neq i},
\end{align*}
where 
$
\prod_{j \neq i}
\mu(\mat{y}_i | \mat{x}) d\{\mat{y}_j\}_{j\neq i} = 1
$.
Then,
\begin{align*}
\mathbb{E}_{P(\{\mat{y}_i \}_{i=0}^M)
}
\Big[
\frac{\nabla_{\boldsymbol\theta}\exp\big( 
\beta
r_{\boldsymbol\theta}(\mat{x}, \mat{y}_i)
\big)} 
{\hat{Z}_{\boldsymbol\theta}(\mat{x})
}
\Big]
& =
\frac{1}{Z_{\boldsymbol\theta}(\mat{x})}
\int
\nabla_{\boldsymbol\theta}\exp\big( 
\beta
r_{\boldsymbol\theta}(\mat{x}, \mat{y}_i)
\big)
\cdot
\mu(\mat{y}_i | \mat{x})
\cdot
d\mat{y}_i,
\\
& =
\frac{1}{Z_{\boldsymbol\theta}(\mat{x})}
\int
\exp\big(
\beta
r_{\boldsymbol\theta}(\mat{x}, \mat{y})
\big)
\cdot
\nabla_{\boldsymbol\theta}
\log
\exp\big(
\beta
r_{\boldsymbol\theta}(\mat{x}, \mat{y})
\big)
\cdot
\mu(\mat{y} | \mat{x}) 
\cdot
d\mat{y}.
\end{align*}

Recall the expression of the probability model,
$
\exp\big(
\beta
r_{\boldsymbol\theta}(\mat{x}, \mat{y}_i)
\big)
=
\frac{Z_{\boldsymbol\theta}(\mat{x}) p_{\boldsymbol\theta}(\mat{y} | \mat{x})} {\mu(\mat{y} | \mat{x})}
$.
Then,
\begin{align*}
\label{eq: appendix derivation unbiased gradient 3}
\mathbb{E}_{P(\{\mat{y}_i \}_{i=0}^M)
}
\Big[
\frac{\nabla_{\boldsymbol\theta}\exp\big( 
\beta
r_{\boldsymbol\theta}(\mat{x}, \mat{y}_i)
\big)} 
{\hat{Z}_{\boldsymbol\theta}(\mat{x})
}
\Big]
& =
\frac{1}{Z_{\boldsymbol\theta}(\mat{x})}
\int
\frac{Z_{\boldsymbol\theta}(\mat{x}) p_{\boldsymbol\theta}(\mat{y} | \mat{x})} {\mu(\mat{y} | \mat{x})}
\cdot
\nabla_{\boldsymbol\theta}
\log 
\frac{Z_{\boldsymbol\theta}(\mat{x}) p_{\boldsymbol\theta}(\mat{y} | \mat{x})} {\mu(\mat{y} | \mat{x})}
\cdot
\mu(\mat{y} | \mat{x}) 
\cdot
d\mat{y},
\\ \nonumber
& =
\int
p_{\boldsymbol\theta}(\mat{y} | \mat{x})
[
\nabla_{\boldsymbol\theta} \log Z_{\boldsymbol\theta}(\mat{x})
+
\nabla_{\boldsymbol\theta} \log p_{\boldsymbol\theta}(\mat{y} | \mat{x}) 
-
\nabla_{\boldsymbol\theta} \log \mu(\mat{y} | \mat{x})
]
d\mat{y},
\\ \nonumber
& =
\int p_{\boldsymbol\theta}(\mat{y} | \mat{x})
\nabla_{\boldsymbol\theta} \log Z_{\boldsymbol\theta}(\mat{x})
d \mat{y}
+
\int p_{\boldsymbol\theta}(\mat{y} | \mat{x})
\nabla_{\boldsymbol\theta} \log p_{\boldsymbol\theta}(\mat{y} | \mat{x})
d \mat{y},
\\ \nonumber
& =
\nabla_{\boldsymbol\theta} \log Z_{\boldsymbol\theta}(\mat{x})
\int
p_{\boldsymbol\theta}(\mat{y} | \mat{x}) d\mat{y}
+
\int
\nabla_{\boldsymbol\theta}
p_{\boldsymbol\theta}(\mat{y} | \mat{x}) d\mat{y},
\\ \nonumber
& =
\nabla_{\boldsymbol\theta} \log Z_{\boldsymbol\theta}(\mat{x})
+
\nabla_{\boldsymbol\theta}
\int
p_{\boldsymbol\theta}(\mat{y} | \mat{x}) d\mat{y},
\\ \nonumber
& =
\nabla_{\boldsymbol\theta} \log Z_{\boldsymbol\theta}(\mat{x}).
\end{align*}

Recall in Eq.~\eqref{eq: appendix derivation unbiased gradient 2},
\begin{align*}
\mathbb{E}_{p_{\boldsymbol\theta}(\mat{y}_0 | \mat{x}) 
\mu(\{\mat{y}_i\}_{i=1}^M | \mat{x})
}
\Big[
\nabla_{\boldsymbol\theta} \log \hat{Z}_{\boldsymbol\theta}(\mat{x})
\Big]
& = 
\frac{1} {M + 1}
\sum_{i=0}^M
\mathbb{E}_{
P(\{\mat{y}_i \}_{i=0}^M)
}
\Big[
\frac{\nabla_{\boldsymbol\theta}\exp\big(
\beta
r_{\boldsymbol\theta}(\mat{x}, \mat{y}_i)
\big)
} 
{\hat{Z}_{\boldsymbol\theta}(\mat{x})}
\Big],
\\ \nonumber
& =
\frac{1}{M + 1}
\sum_{i=0}^M
\nabla_{\boldsymbol\theta} \log Z_{\boldsymbol\theta}(\mat{x}),
\\ \nonumber
& =
\nabla_{\boldsymbol\theta} \log Z_{\boldsymbol\theta}(\mat{x}).
\end{align*}
\end{proof}

\newpage
\section{Extensive Experimental Results}
\label{appendix: extensive experimental results}

\subsection{Experimental Setup}
\label{appendix: experimental setup}
We provide details about experimental setting here.

\paragraph{Models and datasets.}
Our main experiments are conducted under three settings.

\begin{itemize}
\item \textbf{Base setup:}
This setting considers the 
\href{https://huggingface.co/allenai/Llama-3.1-Tulu-3-8B-SFT}{Llama-3.1-8B-SFT} model, which has been fine-tuned using supervised next-word prediction on the TÜLU 3 SFT Mix dataset \citep{lambert2024t},
and {\href{https://huggingface.co/HuggingFaceH4/mistral-7b-sft-beta}{Mistral-7B-SFT}}.

The \href{https://huggingface.co/allenai/Llama-3.1-Tulu-3-8B-SFT}{Llama-3.1-8B} model is constructed by fine-tuning the \href{https://huggingface.co/meta-llama/Llama-3.1-8B}{Llama-3.1-8B-base} on the TÜLU 3 SFT Mix dataset. 
The TÜLU 3 SFT Mix dataset spans various domains including general instruction following, knowledge recall, mathematical reasoning, coding, safety, non-compliance, and multilingual tasks, with domain mixing ratios determined by thorough experimental analyses and contains approximately $23$M prompt-response pairs. 
We employ the publicly available model checkpoint of the \href{https://huggingface.co/allenai/Llama-3.1-Tulu-3-8B-SFT}{Llama-3.1-8B} for further fine-tuning on the \href{https://huggingface.co/datasets/berkeley-nest/Nectar}{Nectar} dataset \citep{zhu2023starling}, which includes $7$ ranked completions per input prompt generated by various LLMs, providing a diverse and high-quality set of candidate completions. 
The Nectar dataset is modified by removing the rank-$2$ completion, leaving each prompt with $5$ ranked completions.
For each prompt, the rank-$1$ completion is considered as the preferred completion, and a dispreferred completion is randomly selected from the remaining $5$ candidates.
\item \textbf{Instruct setup:}
This setup considers the off-the-shelf instruction-tuned
\href{https://huggingface.co/meta-llama/Llama-3.1-8B-Instruct}{Llama-3.1-8B-Instruct} model to initialize the target policy $\pi_{\boldsymbol\theta}$.
This model has undergone extensive instruction-tuning processes, making it more expressive compared to the initialization model in the base setup.

We utilize prompts from the \href{https://huggingface.co/datasets/HuggingFaceH4/ultrafeedback_binarized}{UltraFeedback} dataset \citep{cui2023ultrafeedback} to regenerate both chosen and rejected completions employing the \href{https://huggingface.co/meta-llama/Llama-3.1-8B-Instruct}{Llama-3.1-8B-Instruct} model. 
This approach aligns the instruct setup more closely with an on-policy framework \citep{tang2024understanding}.
Specifically, for each prompt, we generate two completions at a temperature of $0.6$, two at $0.8$, and two at $1.0$, thereby introducing diversity within the candidate completions. 
Subsequently, we implement the iterative pairwise ranking method \citep{chen2024towards} using the \href{https://huggingface.co/meta-llama/Llama-3.1-70B-Instruct}{Llama-3.1-70B-Instruct} \citep{dubey2024llama} to determine the most preferred completion and randomly select a dispreferred completion from the remaining candidates.
The iterative pairwise ranking algorithm \citep{chen2024towards} relies on two assumptions to identify the winner:
\begin{enumerate}[noitemsep]
    \item
    \textbf{Transitive:}
    $y^{(i, a)} \succ y^{(i, b)}$ and $y^{(i, b)} \succ y^{(i, c)}$ leads to $y^{(i, a)} \succ y^{(i, c)}$ almost surely, where $a, b, c \in \{1, 2, \dots, M\}$.
    \item 
    \textbf{Symmetry:}
    The ordering of two completions does not affect the comparison result $W$,
    $W(x^i, y^{(i, a)}, y^{(i, b)}) = W(x^i, y^{(i, b)}, y^{(i, a)})$.
\end{enumerate}
Given these assumptions, identifying the most preferred completion from $L$ candidates can be accomplished from $(L-1)$ comparisons.
Specifically, the algorithm initiates by comparing the first pair of completions, followed by comparing their winner with the next candidate. 
This iterative process continues until an overall winner is determined.
\item \textbf{Batched online setup:}
This setting is the middle of the offline and purely online setups \citep{schulman2017proximal, lambert2024t}, striking a balance between deployment efficiency and adaptability.
The number of total training steps is calculated as the number of total data divided by the effective batch size (the effective batch size is chosen as $128$ across all experiments).
The training steps are then divided equally into three segments, and we use the model checkpoint from the start of each segment to regenerate the preference data. 
For example, with a total of $450$ training steps, we initiate with the \href{https://huggingface.co/meta-llama/Llama-3.1-8B-Instruct}{Llama-3.1-8B-Instruct} model to generate preference data for the first $150$ steps. 
At the $\rm 150^{th}$ step, we utilize the current model checkpoint to generate data for the next $150$ steps, continuing this sequence. 
The preference data generation adheres to the Instruct setting. 
This method proves more efficient than a purely online approach \citep{schulman2017proximal,qi2024online}, as starting the inference kernel in an online environment often incurs significant computational costs \citep{kwon2023efficient}.
\end{itemize}

\paragraph{MC-PO implementation details.}
Recall the MC kernel defined in Algorithm \ref{alg: contrastive divergence kernel},
this kernel selects a output based on samples from a proposal distribution $\mu$.
In the instruct setup,
the proposal distribution is considered as the reference policy $\pi_{\rm ref}$,
that is the \href{https://huggingface.co/meta-llama/Llama-3.1-8B-Instruct}{Llama-3.1-8B-Instruct} model.
In the base setup,
we consider the proposal distribution as a list of LLMs that are used to generate all completions in the \href{https://huggingface.co/datasets/berkeley-nest/Nectar}{Nectar} dataset.
These LLMs include GPT-4, GPT-3.5-turbo, GPT-3.5-turbo-instruct, LLama-2-7B-chat, and Mistral-7B-Instruct, alongside other existing datasets and models.

\paragraph{Training.}
All training jobs are done using full-parameter tuning. 
We fix the batch size as $1$ and gradient accumulation steps as $128$,
which results in an effective batch size of $128$.
We train all models with $2$ epochs.
Hyperparameter optimization is conducted using $7$ different learning rates.
All results are reported as the average performance of the final checkpoints across $3$ random seeds, along with the standard deviation,
which can effectively reduce numerical randomness \citep{miller2024adding}.
Each training job is done on a node of $8\cdot$A100 GPUs and multiple nodes are executed in parallel.

\begin{itemize}
\item \textbf{Justification on $2$ training epochs.}
\begin{table}[h!]
\centering
\begin{tabular}{l|ccc}
\hline
Model & \multicolumn{3}{|c}{\href{https://huggingface.co/allenai/Llama-3.1-Tulu-3-8B-SFT}{Llama-3.1-8B-Base} (Alpaca-Eval)} \\
& Epoch $1$ & Epoch $2$ & Epoch $3$ \\
MC-PO & 32.93($\pm$0.39) & 35.84($\pm$0.31) & 35.01($\pm$0.71) \\ 
\hline
Model & \multicolumn{3}{|c}{\href{https://huggingface.co/allenai/Llama-3.1-Tulu-3-8B-SFT}{Llama-3.1-8B-Base} (Arena)} \\
& Epoch $1$ & Epoch $2$ & Epoch $3$ \\
MC-PO & 61.70($\pm$0.29) & 63.77($\pm$0.81) & 63.83($\pm$0.75) \\ 
\hline
\end{tabular}
\caption{
Performance of preference-optimized models using MC-PO at each training epoch.
}
\label{table: mc-po on all epochs}
\end{table}
As shown in Table \ref{table: mc-po on all epochs},
the MC-PO training from epoch $1$ to epoch $2$ demonstrates substantial performance improvement.
Extending training to $3$ epochs does not yield additional improvements in performance.

\item 
\textbf{Details on hyperparameter optimization.}
We choose $7$ learning rates for all PO algorithms.
These include 
$8e-7$,
$1e-6$,
$2e-6$,
$5e-6$,
$8e-7$,
$1e-5$,
$2e-5$.
Since each experiment is repeated with three random seeds,
each reported number in the experiment section requires training $3 \times 7 = 21$ models.

\end{itemize}

\paragraph{Evaluation.}
We compute winrate with \href{https://huggingface.co/mistralai/Mistral-Large-Instruct-2407}{Mistral-Large-Instruct-2407  } as the model judge for all evaluations.
The input prompt for the LLM judge is shown as follows,

\begin{codebox}
You are a helpful assistant, that ranks models by the quality of their answers. \
Act as an impartial judge and evaluate the quality of the responses provided by two AI assistants to the user question displayed below. \
The length of the response generated by each assistant is not a criterion for evaluation. \
Your evaluation should consider correctness, helpfulness, completeness, and clarity of the responses. \
Remember not to allow the length of the responses to influence your evaluation. \
You will be given the question within <question> tags, \
assistant A's answer within <assistant a> tags. \
and assistant B's answer within <assistant b> tags. \
Your job is to evaluate whether assistant A's answer or assistant B's answer is better. \
Avoid any position biases and ensure that the order in which the responses are presented does not \
influence your decision. Be as objective as possible. \
After providing your explanation, output your final verdict within <verdict> tags strictly following this format: \
<verdict>A</verdict> if assistant A is better, <verdict>B</verdict> if assistant B is better, and <verdict>tie</verdict> for a tie.
You must provide your final verdict with the format <verdict>xxx</verdict> once in your response!!!

<question>
{question}
</question>

<assistant a>
{response a}
</assistant a>

<assistant b>
{response b}
</assistant b>
\end{codebox}

\subsection{Details on Baseline Preference Optimization Algorithms}
\label{appendix: baseline preference optimization algorithms}
All baseline algorithms are implemented in the {\href{https://huggingface.co/docs/trl/en/index}{TRL library},
and their objective functions are summarized in Table \ref{table: preference optimization baseline algorithms}.
Here we present the details about their hyper-parameter choices.
The hyper-parameter $\beta$ is chosen as $0.01$ in all PO algorithms (if it appears).
The hyper-parameter $\lambda$ for the supervised next-word prediction is set as $0.1$.
$\gamma$ in SimPO and CPO is fixed as $10$.

\begin{table*}[h!]
\centering
\begin{tabular}{l|cc}
\hline
\multicolumn{1}{c|}{Method}  & Objective Function \\ \hline
DPO & 
$
- \log \sigma
\Big(
\beta
\log
\frac{\pi_{\boldsymbol\theta}(\mat{y}_0 | \mat{x})} 
{\pi_{\rm ref}(\mat{y}_0 | \mat{x})}
-
\beta
\log
\frac{\pi_{\boldsymbol\theta}(\mat{y}_1 | \mat{x})} 
{\pi_{\rm ref}(\mat{y}_1 | \mat{x})}
\Big)
$
\\
RPO & 
$
- \log \sigma
\Big(
\beta
\log
\frac{\pi_{\boldsymbol\theta}(\mat{y}_0 | \mat{x})} 
{\pi_{\rm ref}(\mat{y}_0 | \mat{x})}
-
\beta
\log
\frac{\pi_{\boldsymbol\theta}(\mat{y}_1 | \mat{x})} 
{\pi_{\rm ref}(\mat{y}_1 | \mat{x})}
\Big)
-
\lambda
\log
\frac{\pi_{\boldsymbol\theta}(\mat{y}_0 | \mat{x})} 
{\pi_{\rm ref}(\mat{y}_0 | \mat{x})}
$
\\
EXO &
$
-
\sigma \Big(
\beta 
\cdot
{\rm logits}
\Big)
\log
\sigma \Big(
\beta 
\cdot
{\rm logits}
\Big)
+
\sigma \Big(
\beta 
\cdot
{\rm logits}
\Big)
\log
\sigma \Big(
-
\beta 
\cdot
{\rm logits}
\Big)
$
\\
SimPO &
$
- \log \sigma \Big(\frac{\beta} {|\mat{y}_0|} \log \pi_{\boldsymbol\theta} (\mat{y}_0 | \mat{x}) -  \frac{\beta} {|\mat{y}_1|} \log \pi_{\boldsymbol\theta} (\mat{y}_1 | \mat{x}) - \gamma \Big)
$ 
\\
CPO &
$
- \log \sigma \Big(
\frac{\beta} {|\mat{y}_0|} \log \pi_{\boldsymbol\theta} (\mat{y}_0 | \mat{x}) -  \frac{\beta} {|\mat{y}_1|} \log \pi_{\boldsymbol\theta} (\mat{y}_1 | \mat{x}) - \gamma \Big)
-
\lambda \cdot
\frac{\beta} {|\mat{y}_0|} \log \pi_{\boldsymbol\theta} (\mat{y}_0 | \mat{x})
$
\\
\hline
BCO & 
$
- \log \sigma
\Big(
\beta
\log
\frac{\pi_{\boldsymbol\theta}(\mat{y}_0 | \mat{x})} 
{\pi_{\rm ref}(\mat{y}_0 | \mat{x})} 
- \Delta_{\rm BCO}
\Big)
-
\log \sigma
\Big(
-
\beta
\log
\frac{\pi_{\boldsymbol\theta}(\mat{y}_1 | \mat{x})} 
{\pi_{\rm ref}(\mat{y}_1 | \mat{x})}
-
\Delta_{\rm BCO}
\Big)
$
\\
KTO & 
$
- \log \sigma
\Big(
\beta
\log
\frac{\pi_{\boldsymbol\theta}(\mat{y}_0 | \mat{x})} 
{\pi_{\rm ref}(\mat{y}_0 | \mat{x})} 
- \Delta_{\rm KTO}
\Big)
-
\log \sigma
\Big(
-
\beta
\log
\frac{\pi_{\boldsymbol\theta}(\mat{y}_1 | \mat{x})} 
{\pi_{\rm ref}(\mat{y}_1 | \mat{x})}
-
\Delta_{\rm KTO}
\Big)
$
\\
APO & 
$
- \log \sigma
\Big(
\beta
\log
\frac{\pi_{\boldsymbol\theta}(\mat{y}_0 | \mat{x})} 
{\pi_{\rm ref}(\mat{y}_0 | \mat{x})}
\Big)
+
\log \sigma
\Big(
\beta
\log
\frac{\pi_{\boldsymbol\theta}(\mat{y}_1 | \mat{x})} 
{\pi_{\rm ref}(\mat{y}_1 | \mat{x})}
\Big)
$
\\
SPPO & 
$
\Big(
\log
\frac{\pi_{\boldsymbol\theta}(\mat{y}_0 | \mat{x})} 
{\pi_{\rm ref}(\mat{y}_0 | \mat{x})}
-
\frac{0.5} {\beta}
\Big)^2
+
\Big(
\log
\frac{\pi_{\boldsymbol\theta}(\mat{y}_1 | \mat{x})} 
{\pi_{\rm ref}(\mat{y}_1 | \mat{x})}
+
\frac{0.5} {\beta}
\Big)^2
$
\\
NCA &
$
- \log \sigma
\Big(
\beta
\log
\frac{\pi_{\boldsymbol\theta}(\mat{y}_0 | \mat{x})} 
{\pi_{\rm ref}(\mat{y}_0 | \mat{x})}
\Big)
-
0.5 
\log \sigma
\Big(
-
\beta
\log
\frac{\pi_{\boldsymbol\theta}(\mat{y}_0 | \mat{x})} 
{\pi_{\rm ref}(\mat{y}_0 | \mat{x})}
\Big)
-
0.5 
\log \sigma
\Big(
-
\beta
\log
\frac{\pi_{\boldsymbol\theta}(\mat{y}_1 | \mat{x})} 
{\pi_{\rm ref}(\mat{y}_1 | \mat{x})}
\Big)
$
\\
\hline
MC-PO &
$
-
\beta
\log
\frac{\pi_{\boldsymbol\theta}(\mat{y}_0 | \mat{x})} 
{\pi_{\rm ref}(\mat{y}_0 | \mat{x})}
+
\log
{\sum_{i=0}^M \exp
\Big(
\beta
\log
\frac{\pi_{\boldsymbol\theta}(\mat{y}_i | \mat{x})} 
{\pi_{\rm ref}(\mat{y}_i | \mat{x})}
\Big)}.
$
\\ \hline
\end{tabular}
\caption{
Preference optimization algorithms and their objective function implementations.
In RPO and CPO: $\lambda$ is a hyper-parameter controlling the supervised next-word prediction regularization.
In EXO: 
$\rm{logits} := \log
\frac{\pi_{\boldsymbol\theta}(\mat{y}_0 | \mat{x})} 
{\pi_{\rm ref}(\mat{y}_0 | \mat{x})}$.
In SimPO: $\gamma$ is a hyper-parameter.
In BCO and KTO: $\Delta_{\rm BCO}$ and $\Delta_{\rm KTO}$ are empirically computed.
}
\label{table: preference optimization baseline algorithms}
\end{table*}

\end{document}